\newcommand{\tobo}[1]{{\noindent \color{red} \textsc{tobo}: #1}}
\newcommand{\s}{\scriptstyle}
\newcommand{\longv}[1]{} 
\title[Attention! Dynamic Epistemic Logic Models of (In)attentive Agents]{Attention! \\Dynamic Epistemic Logic Models of (In)attentive Agents}
\author{Gaia Belardinelli}
\affiliation{
	\institution{University of Copenhagen}
	\city{Copenhagen}
	\country{Denmark}
}
\email{belardinelli@hum.ku.dk}
\author{Thomas Bolander}
\affiliation{
	\institution{Technical University of Denmark}
	\city{Kgs.\ Lyngby}
	\country{Denmark}}
\email{tobo@dtu.dk}
\begin{abstract}
	Attention is the crucial cognitive ability that limits and selects what information we observe. Previous work by Bolander et al. (2016) proposes a model of attention based on dynamic epistemic logic (DEL)  where agents are either fully attentive  or not attentive at all. While introducing the realistic feature that inattentive agents believe nothing happens, the model does not represent the most essential aspect of attention: its selectivity. Here, we propose a generalization that allows for paying attention to subsets of atomic formulas. We introduce the corresponding logic for propositional attention, and show its axiomatization to be sound and complete. We then extend the framework to account for inattentive agents that, instead of assuming nothing happens, may default to a specific truth-value of what they failed to attend to (a sort of prior concerning the unattended atoms). This feature allows for a more cognitively plausible representation of the inattentional blindness phenomenon, where agents end up with false beliefs due to their failure to attend to conspicuous but unexpected events. 
	Both versions of the model define attention-based learning through appropriate DEL event models based on a few and clear edge principles. While the size of such event models grow exponentially both with the number of agents and the number of atoms, 
	we introduce a new logical language for describing event models syntactically 
	and show that using this language our event models can be represented linearly in the number of agents and atoms. Furthermore, representing our event models using this language is achieved by a straightforward formalisation of the aforementioned edge principles.  
\end{abstract}
\keywords{Dynamic Epistemic Logic; Attention; Inattentional Blindness; Default Values; Syntactic Event Models; Succinctness}
\newcommand{\BibTeX}{\rm B\kern-.05em{\sc i\kern-.025em b}\kern-.08em\TeX}
\begin{document}
	\emergencystretch 3em

	\pagestyle{fancy}
	\fancyhead{}
	
	\settopmatter{printfolios=true}
	\maketitle


	\section{Introduction}
	Attention is the capacity of the mind to focus on a specific subset of available information. It limits and selects what we observe, to the extent that we may only consciously perceive events that receive our focused attention \cite{simons1999gorillas}. A fascinating family of phenomena suggesting that attention is necessary for visual awareness is the one where agents completely miss conspicuous events even when they happen at fixation. 
	Inattentional blindness is one such phenomenon \cite{mack1998inattention}. The name is suggestive of a form of cognitive blindness to external stimuli, which has been robustly replicated in the cognitive science literature. A famous experiment to test it is the so called \emph{Invisible Gorilla} video~\cite{gorilla_youtube}, by Simons and Chabris \cite{simons1999gorillas}. It is an online video where subjects are asked to ``Count how many times the players wearing white pass the basketball". While they focus on counting the ball passages, a clearly visible person in a gorilla costume crosses the scene. It is an unexpected appearance in such a situation, but it is an appearance right at fixation, as the gorilla passes in the middle of the group of players. Yet, Simons and Chabris found that about 50\% of subjects do not perceive any gorilla \cite{simons1999gorillas, gorilla_youtube}. Interestingly, subjects are often surprised when they realise to have missed such a salient event. This surprise has been taken to reveal a metacognitive error about the completeness of visual awareness, or in other words, an incorrect belief about attention capacities \cite{simons2011believe, simons2012common}. Indeed, researchers have shown that it is common for people to believe that they would notice much more than they in fact do, and that when they fail to attend something, they are not only uncertain about what they missed, but they often believe that what they did not notice, did not happen \cite{simons1999gorillas}. Attention and its limitations thus have substantial implications in people's belief dynamics, both in the sense that they severely limit what information is received, and in the sense that subjects often hold definite beliefs about events that they attend or fail to attend.

	Dynamic Epistemic Logic (DEL) is a branch of epistemic logic that has been used to study the dynamics of knowledge and beliefs \cite{ditmarsch2007dynamic}. Only relatively recently has there been investigations into the notion of attention and related phenomena in the DEL literature. For example, Bolander et al. (2016) introduce a form of attention in a DEL framework, representing it as an atomic formula $\mathsf{h}_a$ that, if true, expresses that agent $a$ pays attention to everything happening (any formula announced or any fact revealed) and, if false, that $a$ pays attention to nothing at all \cite{bolander2015announcements}. That work might be considered as a first step towards modelling the rich and complex phenomenon of attention in DEL, and the present work may then be considered a second step. 
	One of our contributions consists in generalizing the framework from  \cite{bolander2015announcements} so that agents can pay attention to any subset of atomic formulas. We encode attention by means of attention atoms $\mathsf{h}_ap$, for each agent $a$ and proposition $p$. 
	For the dynamic part, we generalise the event models from \cite{bolander2015announcements} by first recasting them using a few and clear edge principles. 
	Then, we gradually introduce different event models by building on this version of their model. What we do is the following: First, we account for agents that may have false beliefs about their attention (as in the inattentional blindness phenomenon above). 
	Second, we account for the dynamics of partial learning happening when agents only focus on a subset of the occurring events. In this version of the model, agents learn the part of the events that they are paying attention to, but keep intact their beliefs about what they did not attend to. As we have seen above, in reality, this is not always the case. In inattentional blindness for example, it often happens that inattentive agents change their beliefs to specifically account for the assumption that unattended events \emph{did not occur}. Then, as a third step, we add default values as a parameter of event models, which are a sort of prior that agents have and use to update their beliefs in case they miss some information. This addition gives us a more cognitively plausible representation of the experimental findings mentioned above, as now agents can default to the non-existence of the gorilla in the video even if they were previously uncertain about it. We introduce a logic for the first model of propositional attention (without defaults), and prove its axiomatization sound and complete. Lastly, we show that our idea of representing edges of event models by edge principles can be generalised to a new type of syntactic event models where events and edges are specified using logical formulas. We show exponential succinctness of these syntactic event models as compared to standard (semantic) event models. 
	
	Besides providing insights into how human attention interacts with beliefs, this research also goes towards the improvement of human-AI interaction, as it may help e.g.\ robots to reason about humans, required in human-robot collaboration settings. As explained by Verbrugge \cite{Verbrugge2009matter}, it's potentially dangerous if a robot in a human-robot rescue team makes too optimistic assumptions about the reasoning powers of human team members. The robot might for example falsely rely on a human to have paid attention to a certain danger, where in fact the human didn't. A proactively helpful robot should be able to take the perspective of the human and reason about what the human might or might not have paid attention to, and therefore which false beliefs the human might have. This requires that the robot has a model of the attention system of the human, and how this impacts her beliefs. We believe our models can be used in this way. Concretely, there has already been research on using epistemic planning based on DEL for human-robot collaboration~\cite{bolander2021del}, and since the models of this paper are also based on DEL, they lend themselves to immediate integration into such frameworks and systems. 

	This paper is an extended version of our paper accepted for AAMAS 2023 (paper \#1142). It has been extended with the proofs from the supplementary material of the original submission. 
	\section{Propositional attention}
	\subsection{Language}
	Throughout the paper, we use $Ag$ to denote a finite set of \emph{agents}, $At$ to denote a finite set of \emph{propositional atoms}, and we 
	let $H=\{\mathsf{h}_ap\colon p\in At, a\in Ag\}$ denote the corresponding set of \emph{attention atoms}. 
	With $p\in At, a\in Ag, \mathsf{h}_ap\in H$ and $\mathcal{E}$ being a multi-pointed event model\footnote{Defined further below. As usual in DEL, the syntax and semantics are defined by mutual recursion~\cite{ditmarsch2007dynamic}.}, define the language $\mathcal{L}$ by:\footnote{So $\mathcal{L}$ takes the sets $Ag$ and $At$ as parameters, but we'll keep that dependency implicit throughout the paper.} 
	\[
	\varphi::=\top\mid p\mid \mathsf{h}_a p\mid\neg\varphi\mid\varphi\wedge\varphi\mid B_a\varphi \mid [\mathcal{E}]\varphi.
	\]
	The attention atom $\mathsf{h}_ap$ reads ``agent $a$ is paying attention to whether $p$'',\footnote{The $\mathsf{h}$ in the attention formula stands for $\mathsf{h}$earing. It was proposed in \cite{bolander2015announcements}, and we keep it as we take their framework as our starting point.} $B_a\varphi$ reads ``agent $a$  believes $\varphi$'', and the dynamic modality $[\mathcal{E}]\varphi$ reads ``after $\mathcal{E}$ happens, $\varphi$ is the case". The formulas in $At \cup H \cup \{ \top \}$ are called the \emph{atoms}, and a \emph{literal}  is an atom or its negation. We often write $\bigwedge S$ to denote the conjunction of a set of formulas $S$. If $S$ is empty, we take $\bigwedge S$ as a shorthand for $\top$. 
	To keep things simple, we will assume that all consistent conjunction of literals are in a normal form where: (i) each atom occurs at most once; (ii) $\top$ doesn't occur as a conjunct, unless the formula itself is just $\top$; and (iii) the literals occur in a predetermined order (ordered according to some total order on $At \cup H$). This implies that given any disjoint sets of atoms $P^+$ and $P^-$, there exists a unique conjunction of literals (in normal form) containing all the atoms of $P^+$ positively and all the atoms of $P^-$ negatively. \longv{If $P^+ = P^- = \emptyset$, we have the empty conjunction, which as stated above is just taken to be shorthand for $\top$.}
	For conjuncts that are \emph{not} on this normal form, we assume them to always be replaced by their corresponding normal form. \longv{By this convention, we for instance have $\{ p \land q \land \top, q \land p \} = \{ p \land q \}$, since the two formulas would reduce to the same normal form (think of the conjunctive form as a shorthand for the aforementioned specification $P^+,P^-$ of the positive and negative literals contained in the conjunction).}   
	For any conjunction of literals $\varphi = \bigwedge_{1 \leq i \leq n} \ell_i$ and any literal $\ell$, we say that $\varphi$ \emph{contains} $\ell$ if $\ell = \ell_i$ for some $i$, and in that case we often write $\ell \in \varphi$. 
	For any conjunctions of literals $\varphi$, we define $\mathit{Lit}(\varphi)$ to be the set of literals it contains, that is, $\mathit{Lit}(\varphi) = \{ \ell \mid \ell \in \varphi \}$. For an arbitrary formula $\varphi$, we let $At(\varphi)$ denote the set of propositional atoms appearing in it. 
	\subsection{Kripke Model and Dynamics}
	We are going to model attention and beliefs using 
	DEL~\cite{ditmarsch2007dynamic}, where static beliefs are modelled by pointed Kripke models, and attention-based belief updates are modelled by multi-pointed event models (our product update and satisfaction definitions will be  slightly non-standard due to the multi-pointedness of the event models). 
	\begin{definition}[Kripke Model]\label{def: kripke model} A \emph{Kripke model} is a tuple $\mathcal{M}=(W,R,V)$ where $W\not=\emptyset$ is a finite set of \emph{worlds}, $R:Ag\rightarrow \mathcal{P}(W^{2})$ assigns an \emph{accessibility relation} $R_a$ to each agent $a\in Ag$, and $V:W\rightarrow\mathcal{P}(At\cup H)$ is a \emph{valuation function}. Where $w$ is the \emph{designated world}, we call $(\mathcal{M},w)$ a \emph{pointed Kripke model}.
	\end{definition}
	\begin{definition}[Event Model]
		\label{def: event model} An \emph{event
			model} is a tuple $\mathcal{E}=(E,Q,pre)$
		where $E\neq\emptyset$ is a finite set of \emph{events}, $Q:Ag\rightarrow \mathcal{P}(E^{2})$ assigns an \emph{accessibility relation} $Q_a$ to each agent $a\in Ag$ and $pre:E\rightarrow\mathcal{L}$ 
		assigns a \emph{precondition} to each event $e\in E$. 
		Where $E_d\subseteq E$ is a set of \emph{designated events}, $(\mathcal{E},E_d)$ is a \emph{multi-pointed event model}. 
		When $Ag = \{a\}$ for some $a$, we usually refer to the single-agent event model $(E,Q,pre)$ as $(E,Q_a,pre)$.
	\end{definition}
	We will often denote event models by $\mathcal{E}$ independently of whether we refer to an event model $(E,Q,pre)$ or a multi-pointed event model $((E,Q,pre),E_d)$. Their distinction will be clear from context.
	
	\begin{definition}[Product Update]\label{def: product update no post}  Let $\mathcal{M} = (W,R,V)$ be a Kripke model and $\mathcal{E} = (E,Q,pre)$ be an event model. 
		The \emph{product update} of $\mathcal{M}$ with $\mathcal{E}$ is the Kripke model $\mathcal{M} \otimes \mathcal{E} = (W',R',V')$ where:
		
		$W'=\{(w,e)\in W\times E\colon (\mathcal{M},w) \vDash pre(e)\}$,\footnote{We haven't yet defined satisfaction of formulas in $\mathcal{L}$. It's defined in Definition~\ref{def: truth} below, where we again note the standard mutual recursion used in defining DEL~\cite{ditmarsch2007dynamic}.}

		$R'_a=\{((w,e),(v,f))\in W'\times W'\colon (w,v)\in R_a\text{ and } (e,f)\in Q_a\}$,
		
		$V'((w,e))=\{p\in At\cup H\colon w\in V(p)\}$.

		\noindent	Given a pointed Kripke model $(\mathcal{M},w)$ and a multi-pointed event model $(\mathcal{E}, E_d)$, we say that $(\mathcal{E},E_d)$ is \emph{applicable} in $(\mathcal{M},w)$ iff there exists a unique $e \in E_d$ such that $\mathcal{M}, w \vDash pre(e)$. In that case, we define the \emph{product update} of $(\mathcal{M},w)$ with $(\mathcal{E},E_d)$ as the pointed Kripke model $(\mathcal{M},w) \otimes (\mathcal{E},E_d) = (\mathcal{M} \otimes \mathcal{E}, (w,e))$ where $e$ is the unique element of $E_d$ satisfying $(\mathcal{M}, w) \vDash pre(e)$.
	\end{definition}
	\begin{definition}[Satisfaction]\label{def: truth}
		Let $(\mathcal{M},w )= ((W,R,V),w)$
		be a pointed Kripke model. For any $q\in At\cup H, a\in Ag, \varphi \in \mathcal{L}$ and any multi-pointed event model $\mathcal{E}$, satisfaction of $\mathcal{L}$-formulas in $(\mathcal{M},w)$ is given by the following clauses extended with the standard clauses for the propositional connectives:
		\noindent \begin{center}
			\begin{tabular}{lll}
				$(\mathcal{M},w) \vDash q$ & iff & $q\in V(w)$;\tabularnewline
				$(\mathcal{M},w) \vDash B_a\varphi$ & iff & $(\mathcal{M},v) \vDash \varphi$ for all $(w,v)\in R_a$;\tabularnewline
				$(\mathcal{M},w) \vDash [\mathcal{E}]\varphi$ & iff & 
				if $\mathcal{E}$ is applicable in $(\mathcal{M},w)$ then \\
				&&$(\mathcal{M},w) \otimes \mathcal{E} \vDash \varphi$.\tabularnewline
			\end{tabular}
			\par\end{center}	
		We say that a formula $\varphi$ is \emph{valid} if $(\mathcal{M},w) \vDash \varphi$ for all pointed Kripke models $(\mathcal{M},w)$, and in that case we write $\vDash \varphi$.
	\end{definition}
	\begin{example}\label{ex:static}
		Ann and Bob are watching the Invisible Gorilla video~\cite{gorilla_youtube}. Unbeknownst to Ann, Bob has already seen the video, so he knows the correct answer is 15 and that a clearly visible gorilla will pass by. Ann instead has no information about these things, as she has never seen that video. However, she likes riddles and tests of this sort, in which she gets absorbed very easily. Bob knows that, and thus he also knows that she will completely focus on counting the passages only, without realising that there is a gorilla, and thereby thinking to be paying attention to everything happening in the video, just as Bob. This situation is represented in Figure \ref{figure:static}. We have $(\mathcal{M},w) \vDash B_a \mathsf{h}_a g \land \neg \mathsf{h}_a g$: Ann believes she is paying attention to whether there is a gorilla or not, but she isn't.
		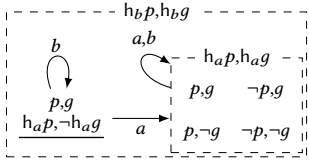
\begin{figure}
			\begin{tikzpicture}\tikzset{deepsquare/.style ={rectangle,draw=black, inner sep=1.5pt, very thin, dashed, minimum height=3pt, minimum width=1pt, text centered}, 
					world/.style={node distance=6pt}, designated/.style={node distance=6pt}
				}
				\node [designated] (!) {$\underline{\s p,g \atop \mathsf{h}_a p, \neg \mathsf{h}_ag}$};
				\path (!) edge [-latex, looseness=7,in=80,out=110] (!) node [above, xshift=-2pt, yshift=22pt] {$\s b$};
				
				\node [world, right=of !, xshift=20pt, yshift=10pt] (1) {$\s p,g$};
				\node [world, below=of 1, yshift=2pt](2) {$\s p, \neg g$};
				\node [world, right=of 1](3) {$\s \neg p,g$};
				\node [world, right=of 2, below=of 3, yshift=2](4) {$\s \neg p,\neg g$};
				\node [deepsquare, fit={($(1) +(0,4mm)$)(2)(3)(4)}](square) {};
				\node[fill=white] (square name 1) at (square.north) {$\s \mathsf{h}_ap, \mathsf{h}_ag$};
				\path (square) edge [-latex,looseness=4.6,in=148,out=165] (square);
				\node [node distance=6pt,above=of square, xshift=-35pt, yshift=-5pt] (loop name 1) {$\s a,b$};
				
				\node[left=-8pt of square, xshift=-2pt, yshift=-5pt] (anchor1) {};
				\path (!) edge [-latex] (anchor1) node [above, xshift=30pt, yshift=-9pt] {$\s a$};
				\node[left=-3pt of square, xshift=2pt, yshift=1.5pt] (anchor1) {};
				
				\node [deepsquare, fit={(square)(!)(square name 1)($(loop name 1) +(0,3mm)$)}](outsquare) {};
				\node [fill=white] (outsquare name) at (outsquare.north) {$\s \mathsf{h}_bp, \mathsf{h}_bg$};
			\end{tikzpicture}
			
			\caption{The pointed Kripke model $(\mathcal{M},w)$. In the figure, $p$ stands for ``the players in the video pass the ball 15 times'', $g$ for ``a clearly visible gorilla crosses the scene''. 
				We use the following conventions. 
				Worlds are represented by sequences of literals true at the world. The model above has 5 worlds, 4 of which are inside the inner dashed box. Designated worlds are underlined. Whenever a world appears inside a dashed box, all the literals in the label of that box are also true in the world---and if the label is underlined, all worlds inside are designated. In this model, $\mathsf{h}_b p$ and $\mathsf{h}_b g$ hold in all worlds, and additionally, $\mathsf{h}_a p$ and $\mathsf{h}_a  g$ hold in the worlds of the inner box. The accessibility relations are represented by labelled arrows. An arrow from (or to) the border of a dashed box means that there is an arrow from (or to) all the events inside the box.  
			}\label{figure:static}
		\end{figure} 
	\end{example}
	\section{Principles for Attention Dynamics}
	In this section, we first present the existing attention model~\cite{bolander2015announcements}. We then propose an alternative representation using our edge principles, introduce a variant, and, finally, generalize to
	multiple propositions (capturing that agents can pay attention to subsets of $At$).
	
	\subsection{The Existing Model and our Version of it}
As in
\cite{bolander2015announcements}, attention is represented as a binary construct where agents can either be paying attention to everything that happens or to nothing.  The language they adopt is as the language above, except for their attention atoms $\mathsf{h}_a$, $a\in Ag$, that are not relativised to propositional formulas. 
The intended meaning of such atoms is that the agent pays attention to everything, so  they can be expressed in our language by
letting $\mathsf{h}_a$, $a\in Ag$, be an abbreviation of the formula $\bigwedge_{p\in At} \mathsf{h}_ap.$
Let $H'=\{\mathsf{h}_a\colon a\in Ag \}$. Then $H'\cup At$ is the set of ``atoms'' on which their language is based. 
The static part of their model is a Kripke model, where it is assumed that agents are \emph{attention introspective}, namely for all $w,v\in W, a\in Ag,$ and $p\in At$, if $(w,v)\in R_a$ then $\mathsf{h}_a (p)\in V(w)$ iff $\mathsf{h}_a(p)\in V(v)$.\longv{\footnote{This assumption is phrased differently in \cite{bolander2015announcements}, as there $\mathsf{h}_a$ is a primitive formula and the valuation function maps atomic formulas into sets of worlds, while here it maps worlds into sets of atomic formulas. Note also that we are generally not assuming our Kripke models to satisfy attention introspection, as the false belief of Ann in Example~\ref{ex:static} makes clear.}}
The dynamics are given by the following event models. 
These event models represent situations in which any formula can be announced, true or false, and attentive agents will come to believe it. 
\begin{definition}[Event Model $\mathcal{E}(\varphi)$, \cite{bolander2015announcements}]\label{def: their action model}
	Given a $\varphi\in \mathcal{L}$, the multi-pointed event model  $\mathcal{E}(\varphi)=((E,Q,pre), E \setminus \{ s_\top\} )$ is defined by: 
	
	$E=\{(i,J)\colon i\in\{0,1\}\text{ and } J\subseteq Ag\}\cup \{s_\top\};$
	
	
	\noindent \hspace{1mm}
	\begin{tabular}{ll}
		$Q_a=$&$\{((i,J),(1,K))\colon i\in \{0,1\}, J,K\subseteq Ag \text{ and }a\in J\}\ \cup $\\
		&$\{((i,J),s_\top )\colon i\in \{0,1\}, J\subseteq Ag \text{ and }a\notin J\};$
	\end{tabular}
	
	$pre\colon E\rightarrow \mathcal{L}$ is defined as follows, for $J\subseteq Ag$:
	\vspace{-1mm}
	\begin{itemize}
		\item [-] $pre((0,J))=\neg \varphi\wedge \bigwedge_{a\in J} \mathsf{h}_a \wedge \bigwedge_{a\not\in J} \neg \mathsf{h}_a;$
		\item [-] $pre((1,J))=\varphi\wedge \bigwedge_{a\in J} \mathsf{h}_a \wedge \bigwedge_{a\not\in J} \neg \mathsf{h}_a;$
		\item [-] $pre(s_\top)=\top$. 
	\end{itemize}
	
\end{definition}
This event model contains $2^{|Ag|+1}+1$ events~\cite{bolander2015announcements}. The preconditions of these events express whether the announced $\varphi$ is true (i.e., whether it occurs positively or negatively in the precondition) and whether each agent $a$ is attentive or not (i.e., whether $\mathsf{h}_a$ occurs positively or negatively in the precondition). We now briefly explain the intuition behind the edges of the model, but refer to~\cite{bolander2015announcements} for more details.
The elements of $Q_a$ of the form $((i,J),(1,K))$ encode the following: Provided that agent 
$a$ is attentive (i.e., $a\in J$), she believes that any event with precondition $\varphi$ could be the actual one.
The elements of $Q_a$ of the form $((i,J),s_\top)$ then encodes:
If instead she is not paying attention (i.e., $a\not\in J$), she keeps the beliefs she had before the announcement (represented by the event $s_\top$ having the precondition $\top$. The $s_\top$ event induces a copy of the original model, thereby modeling the ``skip'' event where nothing happens).

In the following, for any set $S$, we use $id_S$ to denote the identity function on $S$, i.e., $id_S(s) = s$, for all $s \in S$. From now on, most of our event models will be of a particular form where the set of events is a set of (conjunctive) formulas and where preconditions are given by the identify function on $E$, i.e., $pre = id_E$ (meaning that the events are their own preconditions). Our principle-based version of $\mathcal{E}(\varphi)$ is then the following.
\begin{definition}[Principle-Based Event Model $\mathcal{E}'(\varphi)$]\label{a-star-varphi} Given a $\varphi\in \mathcal{L}$, the multi-pointed event model $\mathcal{E}'(\varphi)=((E,Q, id_{E}), E\setminus \{ \top \})$ is: 
	
	$E=\{\psi \wedge  \bigwedge_{a\in J} \mathsf{h}_a \wedge \bigwedge_{a\not\in J} \neg \mathsf{h}_a\colon \psi\in  \{\varphi,\neg\varphi\}, J\subseteq Ag\}\cup\{\top\};$
	%
	
	$Q_a$ is such that $(e,f)\in Q_a$ iff all the following are true:
	\vspace{-1mm}
	\begin{itemize}
		\item[-] \textsc{Basic Attentiveness}: if $\mathsf{h}_a\in e,$ then $\varphi\in f$;
		
		\item[-] \textsc{Inertia}: if $\mathsf{h}_a \not\in e$, then $f = \top$.
		%
	\end{itemize} 
\end{definition}

The \emph{edge principles} of the model above are \textsc{Basic Attentiveness} and \textsc{Inertia}
, describing the conditions under which there is an edge from $e$ to $f$ for agent $a$, that is, what an agent considers possible after the announcement.
By \textsc{Basic Attentiveness}, paying attention implies that, in all events considered possible, the announcement is true---and hence attentive agents believe what is announced. By \textsc{Inertia}, inattentive agents believe nothing happened, namely they maintain the beliefs they had before the announcement was made. 

Note that we have exactly the same set of event preconditions in $\mathcal{E}'(\varphi)$ as in $\mathcal{E}(\varphi)$. The difference is just that we define the events to be their own preconditions, which is possible since all pairs of events have distinct and mutually inconsistent preconditions. It's easy to check that $\mathcal{E}(\varphi)$ and $\mathcal{E}'(\varphi)$ also have the same edges, hence the models are isomorphic. The following proposition shows this.

\begin{proposition}
	$\mathcal{E}(\varphi)$ of Definition 3.1 and  $\mathcal{E}'(\varphi)$ of Definition 3.2 are isomorphic.
\end{proposition}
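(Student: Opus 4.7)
The plan is to exhibit the obvious bijection between the event sets of the two models and then verify the three conditions required for an isomorphism of multi-pointed event models: agreement of preconditions, of designated sets, and of each accessibility relation $Q_a$. Define $f\colon E\to E'$ by
\[
f((1,J)) = \varphi \wedge \bigwedge_{a\in J}\mathsf{h}_a \wedge \bigwedge_{a\notin J}\neg\mathsf{h}_a,
\]
\[
f((0,J)) = \neg\varphi \wedge \bigwedge_{a\in J}\mathsf{h}_a \wedge \bigwedge_{a\notin J}\neg\mathsf{h}_a,
\]
and $f(s_\top) = \top$. That $f$ is a bijection is immediate from the normal-form convention on conjunctions of literals: distinct pairs $(i,J)$ give conjunctions differing either on the sign of $\varphi$ or on which attention literals occur positively, and the image of $f$ is by inspection exactly $E'$.

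Preservation of preconditions is then immediate, since $pre(e) = f(e) = id_{E'}(f(e))$ for every $e \in E$ by the definition of $f$ and of $pre$ in Definition~\ref{def: their action model}; and preservation of the designated sets holds by construction, since $f$ sends $E\setminus\{s_\top\}$ onto $E'\setminus\{\top\}$. The substantive step is edge preservation, which I would carry out by case analysis on the two families of edges in Definition~\ref{def: their action model}. For an edge $((i,J),(1,K))$ with $a\in J$: $\mathsf{h}_a$ is a positive conjunct of $f((i,J))$, so Inertia is vacuous, while $f((1,K))$ contains $\varphi$, satisfying Basic Attentiveness. For an edge $((i,J),s_\top)$ with $a\notin J$: $\mathsf{h}_a$ does not occur positively in $f((i,J))$ (it appears only negated), so Basic Attentiveness is vacuous, and the target $f(s_\top) = \top$ satisfies Inertia. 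For the converse direction I would take $(e'_1,e'_2)\in Q'_a$ and split on whether $\mathsf{h}_a \in e'_1$: the positive case forces $\varphi \in e'_2$ and hence $e'_2 = f((1,K))$ for some $K$, reconstructing an edge of the first form in $Q_a$; the negative case forces $e'_2 = \top$, reconstructing an edge of the second form.

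There is no real obstacle, but one point worth flagging is the treatment of the skip event: by Definition~\ref{a-star-varphi}, $(\top,\top)\in Q'_a$ for every $a$ (Inertia is trivially satisfied when the target is $\top$, and Basic Attentiveness is vacuous), so for strict isomorphism one needs the matching self-loop $(s_\top,s_\top)$ in $\mathcal{E}(\varphi)$. This is precisely the intended reading of $s_\top$ as inducing a faithful copy of the original Kripke model, in line with the comment following Definition~\ref{def: their action model}, and under this reading the isomorphism goes through cleanly.
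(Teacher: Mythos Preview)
Your approach is essentially the same as the paper's: define the bijection via preconditions, then verify edge preservation by the same two-case analysis (first type $((i,J),(1,K))$ with $a\in J$, second type $((i,J),s_\top)$ with $a\notin J$) in the forward direction, and by splitting on $\mathsf{h}_a\in e$ versus $\mathsf{h}_a\notin e$ in the backward direction.

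Your final paragraph is worth highlighting: you correctly observe that in $\mathcal{E}'(\varphi)$ the pair $(\top,\top)$ lies in $Q'_a$ for every $a$ (since $\mathsf{h}_a\notin\top$ makes \textsc{Basic Attentiveness} vacuous and \textsc{Inertia} satisfied with target $\top$), whereas Definition~\ref{def: their action model} as written gives $s_\top$ no outgoing edges at all. The paper's own proof glosses over this: in the backward direction it extracts $i$ and $J$ from the source $e$ and writes $e=pre((i,J))$, which silently assumes $e\neq\top$. Your reading---that the intended $\mathcal{E}(\varphi)$ carries the self-loop on $s_\top$, consistent with its role as the skip event that copies the original model---is the right fix, and with it the isomorphism is exact. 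So your proof is not only correct but slightly more careful than the paper's on this point.
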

\begin{proof} 
	We already concluded that the two models have the same set of preconditions, and that all events have distinct preconditions. We then just need to show that for all $a \in Ag$ and all events $e,f \in E$ of $\mathcal{E}(\varphi)$, we have $(e,f) \in Q_a$ in $\mathcal{E}(\varphi)$ iff $(pre(e),pre(f)) \in Q_a$ in $\mathcal{E}'(\varphi)$. To see this, consider first an edge in $Q_a$ of $\mathcal{E}(\varphi)$. It's either of the form $((i,J),(1,K))$ for some $i\in \{0,1\}, J,K\subseteq Ag$ and $a\in J$ or it's of the form $((i,J),s_\top)$ for some $i\in \{0,1\}, J\subseteq Ag$ and $a \not\in J$. According to Definition~3.1, an edge of the first form is an edge from an event with precondition $\neg \varphi\wedge \bigwedge_{a\in J} \mathsf{h}_a \wedge \bigwedge_{a\not\in J} \neg \mathsf{h}_a$ or $\varphi\wedge \bigwedge_{a\in J} \mathsf{h}_a \wedge \bigwedge_{a\not\in J} \neg \mathsf{h}_a$  to an event with precondition $\varphi\wedge \bigwedge_{a\in K} \mathsf{h}_a \wedge \bigwedge_{a\not\in K} \neg \mathsf{h}_a$. Such an edge clearly satisfies \textsc{Basic Attentiveness} (since $\varphi$ is a conjunct of the target of the edge) and \textsc{Inertia} (the condition $a \in J$ for the source event implies that $\mathsf{h}_a$ is contained in the precondition of the source, and hence \textsc{Inertia} holds trivially). This shows that edges in $\mathcal{E}(\varphi)$ of the first type are also edges in $\mathcal{E}'(\varphi)$. The argument for edges of the second type is similar, but here the condition of the source is $a \not\in J$, meaning that \textsc{Basic Attentiveness} instead is trivial, and we only need to show \textsc{Inertia}. According to Definition~3.1, an edge of the second type is an edge from an event with precondition $\neg \varphi\wedge \bigwedge_{a\in J} \mathsf{h}_a \wedge \bigwedge_{a\not\in J} \neg \mathsf{h}_a$ or $\varphi\wedge \bigwedge_{a\in J} \mathsf{h}_a \wedge \bigwedge_{a\not\in J} \neg \mathsf{h}_a$ (as before) to an event with precondition $\top$. Since $a \not\in J$, we have that $\mathsf{h}_a$ is not contained in the precondition of the source event. \textsc{Inertia} then requires that the precondition of the target is $\top$, but that we already concluded. So \textsc{Inertia} holds, as required.
	
	For the other direction, we start with an edge $(e,f) \in Q_a$ of $\mathcal{E}'(\varphi)$ satisfying both \textsc{Basic Attentiveness} and \textsc{Inertia}, and show that it is of one of the two types in $\mathcal{E}(\varphi)$. We split into cases depending on whether $\mathsf{h}_a \in e$ or not. If $\mathsf{h}_a \in e$, then by \textsc{Basic Attentiveness}, $\varphi \in f$. Let $J$ denote the set of agents for which $\mathsf{h}_a$ occurs positively in $e$, and let $K$ denote the same set for $f$. Since  $\mathsf{h}_a \in e$, we get $a \in J$. Let $i=0$ if $\neg \varphi$ occurs in $e$, otherwise let $i=1$. Then $e = pre((i,J))$, using the notation from Definition~3.1. Since $\varphi \in f$, we have that $f = pre((1,K))$. By Definition~3.1, $Q_a$ contains an edge from $(i,J)$ to $(1,K)$. This covers the case where $\mathsf{h}_a \in e$. Consider now the case $\mathsf{h}_a \not\in e$. By \textsc{Inertia}, $f = \top$. Define $J$ and $i$ as before from $e$. Then, as before, $e = pre((i,J))$. Since $f = \top$, $f = pre(s_\top)$. By Definition~3.1, $Q_a$ contains an edge from $(i,J)$ to $s_\top$, and we're done.  
\end{proof}


Compare the edge specification from $\mathcal{E}(\varphi)$ with the one from $\mathcal{E}'(\varphi)$. We are defining the same set of edges, but whereas the definition of $Q_a$ in $\mathcal{E}(\varphi)$ does not make it immediately clear what those edges are encoding, we believe that our definition of $Q_a$ in $\mathcal{E}'(\varphi)$ does. It is simply two basic principles, one specifying what events are considered possible by the agents paying attention (\textsc{Basic Attentiveness}), and another specifying the same for those not paying attention 
(\textsc{Inertia}). 
Even though from a technical viewpoint it is not a big step to introduce such principles, we find it helpful to be able to specify the relevant event models in a clear and concise manner. 
This makes it easier to use the model and build on it---as should become evident when we later generalise the event model. 
\subsubsection{Modified model} 
We now introduce a variant of the event model
$\mathcal{E}'(\varphi)$ from 
Def. \ref{a-star-varphi}, 
one that is more appropriate for the types of scenarios that we would like to be able to model. 
\paragraph{Truthful announcements} As the present work aims at modeling (noise-free) attention to external stimuli from the environment, in particular visual attention, the first assumption we give up is that announcements may be false. 
More precisely, we assume that 
if an agent pays attention to $p$ and the truth-value of $p$ is being revealed, then the agent sees the true truth-value of $p$.
The new event model for announcing $\varphi$ should then only contain events where $\varphi$ is true:
	%

\begin{center}
	$E=\{\varphi \wedge  \bigwedge_{a\in J} \mathsf{h}_a \wedge \bigwedge_{\mathsf{h}_a\not\in J} \neg \mathsf{h}_a\colon  J\subseteq Ag\}\cup\{\top\}.$
\end{center}
\paragraph{Learning that you were attentive}
An assumption we have already given up is attention introspection, so in our models the agents may falsely believe to be paying attention (see Example \ref{ex:static}). 
	In this setting, 
	it is very plausible to assume that, besides learning what the true event is, attentive agents also learn that they were attentive.
This does not happen in the event model $\mathcal{E}'(\varphi)$. We thus substitute \textsc{Basic Attentiveness} with the following principle:
\begin{itemize}
	\item [-] \textsc{Attentiveness}: if $\mathsf{h}_a\in e,$ then $\mathsf{h}_a,\varphi \in f$.
\end{itemize}
Summing up, the event model where announcements are truthful and attentive agents learn that they paid attention, looks as follows.
\begin{definition}[Truthful and Introspective Event Model $\mathcal{E}''(\varphi)$] \label{truthful and introspective}
	Given $\varphi\in \mathcal{L}$, the multi-pointed event model $\mathcal{E}''(\varphi)=((E, Q, id_{E}), E\setminus \{ \top \})$ is defined by:
	
	$E=\{\varphi \wedge  \bigwedge_{a\in J} \mathsf{h}_a \wedge \bigwedge_{a\not \in J} \neg \mathsf{h}_a\colon  J\subseteq Ag\}\cup\{\top\}$;
	
	\smallskip
	$Q_a$ is such that $(e,f)\in Q_a$ iff all the following are true:
	\begin{itemize}
		\item[-] \textsc{Attentiveness}: if $\mathsf{h}_a\in e,$ then $\mathsf{h}_a,\varphi \in f$;
		\item[-] \textsc{Inertia}: if $\mathsf{h}_a \not\in e$, then $f = \top$;
	\end{itemize} 
	
\end{definition}	
The event model 
$\mathcal{E}''(p \land g)$ with $Ag = \{a,b\}$  is shown in Figure~\ref{fig: event introspective}. 
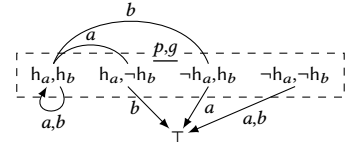
\begin{figure}
	\begin{tikzpicture}\tikzset{deepsquare/.style ={rectangle,draw=black, inner sep=1pt, thin, dashed, minimum height=3pt, minimum width=1pt, text centered}, 
			world/.style={node distance=6pt,inner sep=1pt},auto}
		
		\node [world] (1) {${\s \mathsf{h}_a,  \mathsf{h}_b}$};
		\node [world, right=of 1] (2) {${\s \mathsf{h}_a, \neg \mathsf{h}_b}$};
		\node [world, right=of 2] (3) {${\s \neg \mathsf{h}_a, \mathsf{h}_b}$};
		\node [world, right=of 3] (4) {${\s \neg \mathsf{h}_a,  \neg \mathsf{h}_b}$};
		\node [world, below=of 1, xshift=47pt, yshift=-10pt] (5) {$\s \top$};
		
		\path (1) edge [-latex,loop,looseness=7,in=240,out=300] node[yshift=0.5mm] {$\s a,b$}  (1);
		
		\path (2.south) edge [-latex]  node[left] {$\s b$} (5);
		
		\path (3.south) edge [-latex]  node[yshift=1.5mm]  {$\s a$} (5);
		
		
		\path (4.south) edge [-latex] node[xshift=-1mm,yshift=1mm]  {$\s a,b$} (5);
		
		\path[-] (1.north) edge[bend left=60] node[yshift=-0.5mm] {$\s a$} (2.north);		
		
		\path[-] (1.north) edge[bend left=70] node[yshift=-0.5mm] {$\s b$} (3.north) ;
		
		\node[deepsquare, inner sep=4pt, fit={(1)(2)(3)(4)}] (announcement) {};
		\node[fill=white,inner sep=1pt,xshift=-2mm] (square name 1) at (announcement.north) {\underline{$\s p, g$}};
	\end{tikzpicture}
	\caption{The event model $\mathcal{E}''(p\wedge g)$ with $Ag = \{a,b\}$. As our event models will have conjunctive preconditions, all distinct, and our events are their own preconditions, we can represent events by lists of formulas, the formulas contained in the event precondition. 
		All other conventions are as for Kripke models (see Fig.~\ref{figure:static}).	
	}
	\label{fig: event introspective}
\end{figure}
\subsection{Event Models for Propositional Attention}

In this section, we introduce event models for agents that only pay attention to  subsets of $At$.  
As our main aim  is to model attention to external stimuli, we are interested in modeling the ``announcement'' of a conjunction of literals $(\neg) p_1 \wedge \dots \wedge (\neg) p_n$, which we interpret as the parallel exposure to multiple stimuli (the truth value of all $p_i$ being revealed concurrently). It could for instance be that we see a video that has 15 ball passes and a gorilla passing by, and that would correspond to the ``announcement'' $p \land g$, cf.\ Example	~\ref{ex:static}. 
\begin{definition}[Propositional Attention Event Model $\mathcal{F}(\varphi)$]\label{e-varphi} Let $\varphi=\ell(p_1)\wedge \dots \wedge \ell(p_n) \in \mathcal{L}$, where for each $p_i$, either $\ell(p_i)=p_i$ or $\ell(p_i)=\neg p_i$. 
	The multi-pointed event model $\mathcal{F}(\varphi)=((E,Q,id_{E}), E_d)$ is defined by:
	%
	\begin{multline*}
			E=\{\bigwedge_{p \in S} \ell(p) \wedge  \bigwedge_{a \in Ag} \bigl(\bigwedge_{p \in X_a} \mathsf{h}_a p \wedge \bigwedge_{p \in S \setminus X_a} \neg \mathsf{h}_a p \bigr)  \colon \\S \subseteq \mathit{At}(\varphi) \text{ and for all } a \in Ag,X_a \subseteq S \}
		\end{multline*}
	\smallskip
	$Q_a$ is such that $(e,f)\in Q_a$ iff all the following hold for all $p$: 
	
	\vspace{-1mm}
	\begin{itemize}
		\item[-] \textsc{Attentiveness}: if $\mathsf{h}_a p\in e$ then $\mathsf{h}_a p, \ell(p)\in f$;
		\item[-] \textsc{Inertia}: if $\mathsf{h}_a p\notin e$ then $\ell(p)\not\in f;$
	\end{itemize}  
	
	%
	%
			$E_d=\{\psi\in E\colon \ell(p)\in \psi, \text{ for all } \ell(p)\in \varphi\}$. 
		\end{definition}
		
		In  $\mathcal{F}(\varphi)$ we have, for each subset of literals in $\varphi$, an event containing those literals in the precondition. For those literals, the event also specifies whether each agent is paying attention to it or not. In this way, events account for all possible configurations of attention to any subset of the announcement and for the learning of truthful information regarding it. The edges are again given by two simple principles. 
		\textsc{Attentiveness} states that if an agent pays attention to a specific atom, then she learns the literal in the announcement corresponding to it and that she was paying attention to it. \textsc{Inertia} says that if an agent doesn't pay attention to an atom, then she will not learn anything about it. As we take announcements as truthful revelations, the set of designated events only contains events where all the announced literals are true.
		The event model $\mathcal{F}(\varphi)$ with $\varphi = p \land g$ and $Ag = \{a,b\}$ is shown in Figure~\ref{figure:monster}.
		
		
		%

		\begin{figure}
		\begin{center}
		\includegraphics[width=0.5\textwidth]{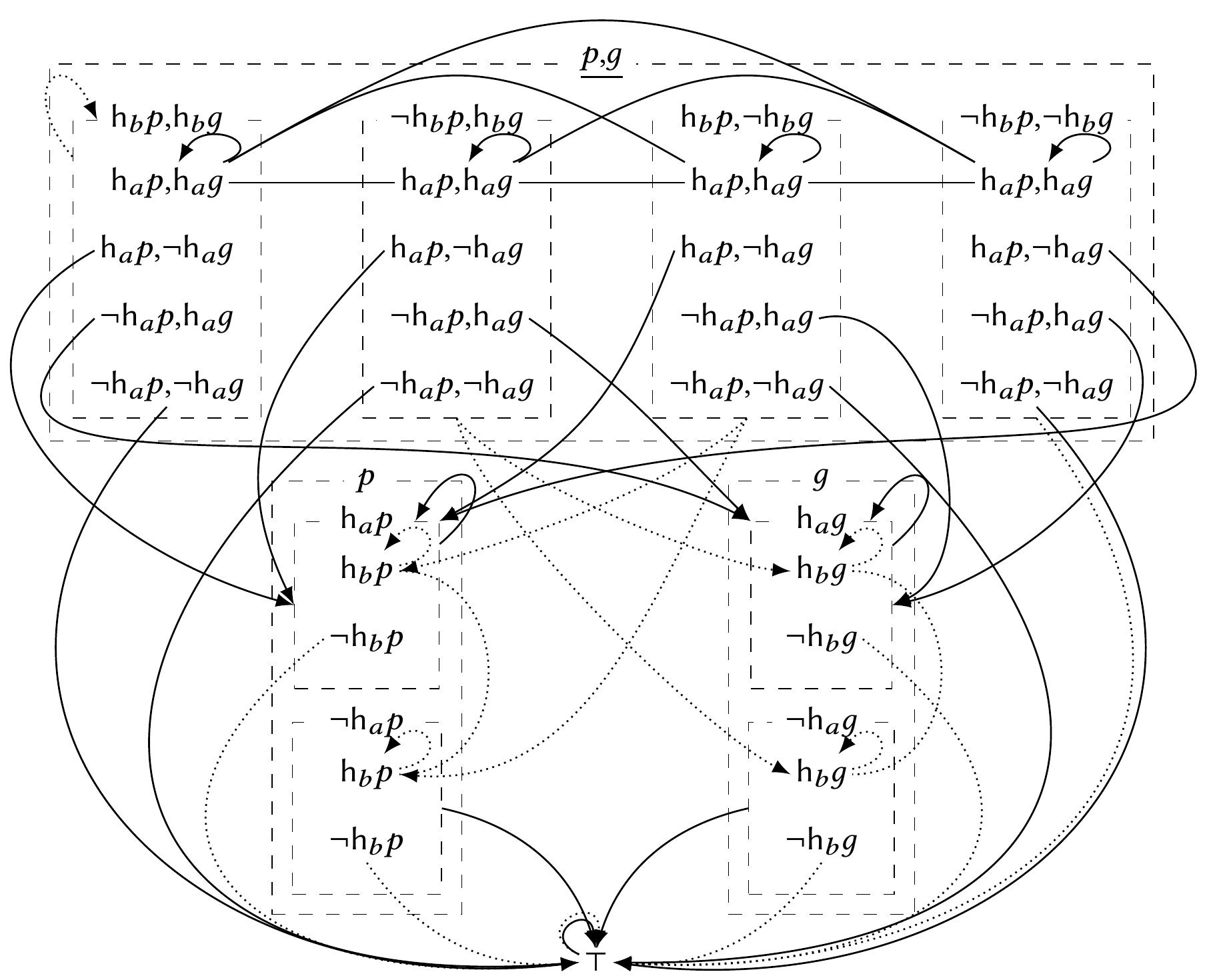}
			\caption{The event model $\mathcal{F}(p\wedge g)$ with $Ag = \{a,b\}$. Solid arrows are for agent $a$, dotted for agent $b$. A small Python program for computing the above edges from the edge principles can be found here: \url{https://tinyurl.com/5ekjmsud}.  
			}	\label{figure:monster}
			\end{center}
		\end{figure}
		
		\begin{example} 
			Continuing Example \ref{ex:static}, Ann and Bob have finished watching the Invisible Gorilla video (event model $\mathcal{F}(p\wedge g)$). As Bob expected, Ann learns that there are 15 ball passes, 
			but she still doesn't know anything about whether there is a gorilla in the video, and believes Bob is in the same situation as herself. 
			The pointed Kripke model $(\mathcal{M}',w') = 
			(\mathcal{M},w)\otimes\mathcal{F}(p\wedge g)$ in Figure \ref{figure:update} (left) represents the situation after exposure to the video, i.e., after the revelation of $p\wedge g$.  Ann has only learnt about $p$ and 
			still has no information about $g$. We thus have $(\mathcal{M}',w') \vDash B_a p\wedge \neg B_a g \land \neg B_a \neg g$. Moreover, she wrongly believes Bob too hasn't received any information about the gorilla, so $(\mathcal{M}',w') \vDash B_b g \land B_a(\neg B_b g \land \neg B_b \neg g)$. 
			
			\begin{figure}
				\begin{tikzpicture}\tikzset{deepsquare/.style ={rectangle,draw=black, inner sep=1.5pt, very thin, dashed, minimum height=3pt, minimum width=1pt, text centered}, 
						world/.style={node distance=6pt}, designated/.style={node distance=6pt}
					}
					\node [designated] (!) {$\underline{\s p,g \atop \mathsf{h}_a p, \neg \mathsf{h}_ag}$};
					\path (!) edge [-latex, looseness=7,in=80,out=110] (!) node [above, xshift=-2pt, yshift=22pt] {$\s b$};
					
					\node [world, right=of !, xshift=20pt, yshift=10pt] (1) {$\s p,g$};
					\node [world, below=of 1, yshift=2pt](2) {$\s p, \neg g$};
					\node [deepsquare, fit={($(1) +(0,4mm)$)(2)}](square) {};
					\node[fill=white] (square name 1) at (square.north) {$\s \mathsf{h}_ap, \mathsf{h}_ag$};
					\path (square) edge [-latex,looseness=5,in=140,out=165] (square);
					\node [node distance=6pt, xshift=30pt, yshift=22pt] (loop name 1) {$\s a,b$};
					
					\node[left=-8pt of square, xshift=-2pt, yshift=-5pt] (anchor1) {};
					\path (!) edge [-latex] (anchor1) node [above, xshift=30pt, yshift=-9pt] {$\s a$};
					\node[left=-3pt of square, xshift=2pt, yshift=1.5pt] (anchor1) {};
					
					\node [deepsquare, fit={(square)(!)(square name 1)($(loop name 1) +(0,3mm)$)}](outsquare) {};
					\node [fill=white] (outsquare name) at (outsquare.north) {$\s \mathsf{h}_bp, \mathsf{h}_bg$};
				\end{tikzpicture} \qquad \quad
				\begin{tikzpicture}\tikzset{deepsquare/.style ={rectangle,draw=black, inner sep=1.5pt, very thin, dashed, minimum height=3pt, minimum width=1pt, text centered}, 
						world/.style={node distance=6pt}, designated/.style={node distance=6pt}
					}
					\node [designated] (!) {$\underline{\s p,g \atop \mathsf{h}_a p, \neg \mathsf{h}_ag}$};
					\path (!) edge [-latex, looseness=7,in=80,out=110] (!) node [above, xshift=-2pt, yshift=22pt] {$\s b$};

					\node [world, right=of !, xshift=15pt, yshift=0.5pt] (1) {$\s p, \neg g, \atop  \mathsf{h}_a p,\mathsf{h}_a g$};
					\path (1) edge [-latex, looseness=7,in=80,out=110]  node [above] {$\s a,b$} (1);

					
					\node[left=-8pt of square, xshift=-2pt, yshift=-5pt] (anchor1) {};
					\path (!) edge [-latex] (anchor1) node [above, xshift=30pt, yshift=-9pt] {$\s a$};
					\node[left=-3pt of square, xshift=2pt, yshift=1.5pt] (anchor1) {};
					
					\node [deepsquare, fit={(square)(!)(square name 1)($(loop name 1) +(1.5,3mm)$)}](outsquare) {};
					\node [fill=white] (outsquare name) at (outsquare.north) {$\s \mathsf{h}_bp, \mathsf{h}_bg$};
				\end{tikzpicture}

				\caption{Pointed Kripke models $(\mathcal{M}',w') = (\mathcal{M},w)\otimes\mathcal{F}(p\wedge g)$ (left) and 
					$(\mathcal{M}'',w'') = (\mathcal{M},w)\otimes\mathcal{E}(p\wedge g, d)$ (right), where the default map $d$ is $d_a(p)=d_b(p)=\top$ and $d_a(g)=d_b(g)=\neg g$. Worlds inaccessible from the designated world are not shown.} \label{figure:update}
			\end{figure}
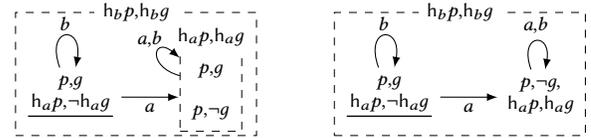 
		\end{example}
		
		%
		%
		\section{Axiomatization} 
	We move to the axiomatization of our logic and show that it is sound and complete. The axiomatization is given by the set of axioms and inference rule of Table \ref{tab:logic}. It comprises standard axioms and inference rules for normal modal logic as well as reduction axioms. All propositional reduction axioms in Table \ref{tab:logic} are for  state-eliminating updates, in that they are relativized to the announced $\varphi$. Then, the only non-standard axiom we introduce is the one expressing the consequences of attention-dependent announcements for what concerns agents' beliefs. Where $\varphi=\ell(p_1)\wedge\dots\wedge \ell(p_n)$ is the announced formula, the axiom is the following:
	\begin{multline*}
		[\mathcal{F}(\varphi)]B_a\psi \leftrightarrow (\varphi \rightarrow \bigvee_{S\subseteq At(\varphi)}\bigl( \bigwedge_{p\in S}\mathsf{h}_ap \wedge \bigwedge_{p\in At(\varphi)\setminus S}\neg \mathsf{h}_ap)\\\rightarrow B_a([\mathcal{F}(\bigwedge_{p\in S} \ell(p)) ]\psi))\bigr)
	\end{multline*}
	The axiom can be read as saying that after exposure to the revelation of $\varphi$,
	agent $a$ believes that only the conjunction of literals from $\varphi$ to which she was paying attention to has been revealed. 
	\begin{table} 
		\caption{\label{tab:logic}The logic of propositional attention $\Lambda$. It is assumed that $\varphi = \ell(p_1)\wedge \dots \wedge \ell(p_n)$ for some literals $\ell(p_i)$, $i = 1,\dots,n$.}
		\begin{tabular}{l}\toprule
			All propositional tautologies\\
			$B_{a}(\varphi\rightarrow \psi)\rightarrow (B_{a}\varphi\rightarrow B_a\psi)$ \\
			$[\mathcal{F}(\varphi)] p \leftrightarrow (\varphi\rightarrow p)$ \\
			$[\mathcal{F}(\varphi)] \neg \psi \leftrightarrow( \varphi \rightarrow \neg [\mathcal{F}(\varphi)]\psi)$ \\
			$[\mathcal{F}(\varphi)](\psi\wedge\chi)\leftrightarrow([\mathcal{F}(\varphi)]\psi\wedge[\mathcal{F}(\varphi)]\chi)$\\
			$[\mathcal{F}(\varphi)]B_a\psi \leftrightarrow (\varphi \rightarrow \bigvee_{S\subseteq At(\varphi)}(( \bigwedge_{p\in S} \mathsf{h}_ap \wedge \bigwedge_{p\in At(\varphi)\setminus S}\neg \mathsf{h}_ap)$ \\ \hfill$\rightarrow B_a([\mathcal{F}(\bigwedge_{p\in S} \ell(p)) ]\psi)))$\\
			From $\varphi$ and $\varphi\rightarrow\psi$, infer $\psi$ \\
			From $\varphi$ infer $B_a\varphi$\\
			From $\varphi\leftrightarrow \psi$, infer $\chi[\varphi\slash p]\leftrightarrow \chi[\psi\slash p]$\footnote{This is standard notation for substitution, although it looks similar to the notation for the dynamic modality.}\\
			\bottomrule
		\end{tabular}
	\end{table}

To prove soundness and completeness of the axiomatization in Table~\ref{tab:logic}, we will use the following lemma, which shows that updating a Kripke model $(\mathcal{M},w)$ with event model $\mathcal{F}(\varphi)$ where $\varphi = \ell(p_1)\wedge \dots \wedge \ell(p_n)$ is announced, or updating it with $\mathcal{F}(\bigwedge_{p\in S}\ell(p))$, where $\bigwedge_{p\in S}\ell(p)$ are the literals from $\varphi$ that agent $a$ is paying attention to at $(\mathcal{M},w)$, yields updates $(\mathcal{M},w)\otimes\mathcal{F}(\varphi)$ and $(\mathcal{M},w)\otimes\mathcal{F}(\bigwedge_{p\in S}\ell(p))$ that are bisimilar from agent $a$'s perspective.

In what follows, events containing all the announced literals will be called ``maximal''. We will use notation $Q_a[e]$ to indicate the states (worlds or events) that are $Q_a$-accessible from $e$, i.e., $Q_a[e]=\{f \colon (e,f)\in Q_a\}$. Lastly, if $\varphi,\psi\in \mathcal{L}$ are conjunctions of literals, we will say that $\psi\in \varphi$ iff  $Lit(\psi)\subseteq Lit(\varphi)$. In that case, we will also say that $\varphi$ contains $\psi$.
		\begin{lemma}\label{cor1} For any pointed Kripke model $(\mathcal{M},w)$ with $(\mathcal{M},w)\vDash\varphi$, and any $a\in Ag$, consider the unique $S\subseteq At(\varphi)$ that is such that $(\mathcal{M},w)\vDash (\bigwedge_{p\in S} \mathsf{h}_ap \wedge \bigwedge_{p\in At(\varphi)\setminus S}\neg \mathsf{h}_ap)$. Then, the updated models $(\mathcal{M},w)\otimes \mathcal{F}(\bigwedge_{p\in S}\ell(p))=((W^{\varphi_S}, R^{\varphi_S}, V^{\varphi_S}),(w,e'))$ and $(\mathcal{M},w)\otimes \mathcal{F}(\varphi)=((W^{\varphi}, R^{\varphi}, V^{\varphi}),(w,e))$ are such that:
		\begin{enumerate}
			\item [(1)] $R_a^{\varphi}[(w,e)]= R_a^{\varphi_S}[(w,e')]$;
		\item [(2)] For all $(v,f)\in R_a^{\varphi}[(w,e)]$, there exists a bisimulation between $(\mathcal{M}^\varphi,(v,f))$ and $(\mathcal{M}^{\varphi_S},(v,f))$, notation $(\mathcal{M}^\varphi,(v,f))\leftrightarroweq(\mathcal{M}^{\varphi_S},(v,f))$;\footnote{The notion of bisimulation for Kripke model is standard, see e.g., \cite{blac.ea:moda}. }
		\end{enumerate}
		\end{lemma}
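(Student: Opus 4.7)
The plan is to first identify the unique designated event $e$ of $\mathcal{F}(\varphi)$ at $w$ and the unique event $e'$ of $\mathcal{F}(\bigwedge_{p\in S}\ell(p))$ at $w$, then use \textsc{Attentiveness} and \textsc{Inertia} to describe $Q_a[e]$ and $Q_a[e']$ explicitly, and finally build a bisimulation by matching events common to both event models.

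For (1), since $(\mathcal{M},w)\vDash\varphi$ and $(\mathcal{M},w)\vDash \bigwedge_{p\in S}\mathsf{h}_ap\wedge\bigwedge_{p\in At(\varphi)\setminus S}\neg\mathsf{h}_ap$, the event $e$ has literal part $\bigwedge_{p\in At(\varphi)}\ell(p)$ and contains $\mathsf{h}_ap$ iff $p\in S$; similarly $e'$ has literal part $\bigwedge_{p\in S}\ell(p)$ and contains $\mathsf{h}_ap$ iff $p\in S$. Applying \textsc{Attentiveness} for $a$ forces every $f\in Q_a[e]$ in $\mathcal{F}(\varphi)$ to contain $\mathsf{h}_ap$ and $\ell(p)$ for every $p\in S$, and \textsc{Inertia} forbids $\ell(p)\in f$ for $p\in At(\varphi)\setminus S$. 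Hence $Q_a[e]$ is exactly the set of events whose literal part is $\bigwedge_{p\in S}\ell(p)$, in which $a$'s attention subset is $S$ and each other agent's attention subset is an arbitrary subset of $S$. The analogous calculation inside $\mathcal{F}(\bigwedge_{p\in S}\ell(p))$, where $At(\bigwedge_{p\in S}\ell(p))=S$, yields the identical characterisation of $Q_a[e']$. Therefore $Q_a[e]=Q_a[e']$ as sets of formulas, and (1) is immediate from the product update definition, since whether $(\mathcal{M},v)\vDash f$ depends only on $f$.

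For (2), I would take as candidate bisimulation
\[
Z=\{((u,g),(u,g)) : g\in E_{\mathcal{F}(\varphi)}\cap E_{\mathcal{F}(\bigwedge_{p\in S}\ell(p))},\ (\mathcal{M},u)\vDash g\},
\]
pairing worlds of the two updates whose event components coincide and belong to both event models. By the computation above, every $(v,f)\in R_a^\varphi[(w,e)]$ is related to itself, and atom invariance is trivial since both updated valuations are obtained by restricting $V$.

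The main obstacle is the forth and back clauses, which reduce to showing that from any event $g$ common to both models and for any agent $b$, the $Q_b$-successors in the two models coincide and are themselves common events. For forth, if $(g,g_1)\in Q_b$ in $\mathcal{F}(\varphi)$, then \textsc{Attentiveness} and \textsc{Inertia} force the literal part of $g_1$ to be exactly $\{\ell(p) : \mathsf{h}_bp\in g\}$, which is a subset of the literals of $g$ and hence of $\bigwedge_{p\in S}\ell(p)$; thus $g_1$ is an event of $\mathcal{F}(\bigwedge_{p\in S}\ell(p))$ and the same principles, now quantified over $p\in S$ rather than $p\in At(\varphi)$, witness the matching edge there. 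For back, the extra instances of \textsc{Attentiveness} and \textsc{Inertia} that $\mathcal{F}(\varphi)$ requires for $p\in At(\varphi)\setminus S$ are vacuously satisfied: $\mathsf{h}_bp\notin g$ and $\ell(p)\notin g_1$ because both $g$ and $g_1$ live in the smaller model. This applies uniformly to every agent, so the forth/back conditions propagate through the reachable portion of $\mathcal{M}^\varphi$ and $\mathcal{M}^{\varphi_S}$, yielding (2).
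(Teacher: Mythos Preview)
Your proposal is correct and follows essentially the same approach as the paper: both arguments establish (1) by using \textsc{Attentiveness} and \textsc{Inertia} to show that $Q_a[e]$ and $Q'_a[e']$ coincide as sets of formulas, and both establish (2) via the identity bisimulation pairing $(u,g)$ with $(u,g)$ whenever $g$ is an event common to $\mathcal{F}(\varphi)$ and $\mathcal{F}(\varphi_S)$. Your explicit characterisation of $Q_a[e]$ (literal part $\bigwedge_{p\in S}\ell(p)$, $a$'s attention set equal to $S$, other agents' attention sets arbitrary subsets of $S$) and your observation that the extra \textsc{Attentiveness}/\textsc{Inertia} instances for $p\in At(\varphi)\setminus S$ are vacuous in the back direction make the argument slightly more transparent than the paper's, but the underlying reasoning is the same.
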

		\begin{proof} Let $(\mathcal{M},w)=((W,R,V),w)$ be a pointed Kripke model. We will use the same notation as in the previous proof for $\varphi_S$, for $\mathcal{F}(\varphi)=((E,Q,pre),E_d)$ and $\mathcal{F}(\varphi_S)=((E',Q',pre'),E'_d)$. For the $\varphi$- and $\varphi_{S}$-updates of $(\mathcal{M},w)$ we will use the notation introduced in the statement of the lemma, if not otherwise stated.
			
			Assume that $(\mathcal{M},w)\vDash\varphi$. Then $\mathcal{F}(\varphi)$ and $\mathcal{F}(\varphi_S)$ are applicable to $(\mathcal{M},w)$, so $(\mathcal{M},w)\otimes \mathcal{F}(\varphi)=(\mathcal{M}^\varphi,(w,e))$ and $(\mathcal{M},w)\otimes \mathcal{F}(\varphi_S)=(\mathcal{M}^{\varphi_S},(w,e'))$ exist. 
			Now let $S\subseteq At(\varphi)$ be the unique $S$ that is such that $(\mathcal{M},w)\vDash (\bigwedge_{p\in S} \mathsf{h}_ap \wedge \bigwedge_{p\in At(\varphi)\setminus S}\neg \mathsf{h}_ap)$, for some $a\in Ag$.
			
			(1) We first show that $R_a^{\varphi}[(w,e)]= R_a^{\varphi_S}[(w,e')]$, proving the two inclusions separately.
			
			$(\Rightarrow)$ Let $(v,f)\in R^{\varphi}[(w,e)]$. This means that $v\in R_a[w]$ and $f\in Q_a[e]$. Then, to reach the desired result that $(v,f)\in R^{\varphi_S}[(w,e')]$, we only need to show that $f\in Q'_a[e']$, as then we would have that $v\in R_a[w]$ and $f\in Q'_a[e']$, and since $(\mathcal{M},v)\vDash pre(f)$ then $(v,f)\in W^{\varphi_S}$ and we could conclude that $(v,f)\in R^{\varphi_S}[(w,e')]$. We show that $f\in Q'_a[e']$ by showing that $f$ is such that $f\in E'$ and that it satisfies the requirements that \textsc{Attentiveness} and \textsc{Inertia} pose to belong to $Q'_a[e']$, i.e., it contains the needed formulas.
			
			So let's first see what formulas $f$ contains.   By initial assumption, $(\mathcal{M},w)\vDash (\bigwedge_{p\in S} \mathsf{h}_ap \wedge \bigwedge_{p\in At(\varphi)\setminus S}\neg \mathsf{h}_ap)$. As $(w,e)\in W^\varphi$, then by product update definition and maximality of $e$, it holds that $(\bigwedge_{p\in S} \mathsf{h}_ap \wedge \bigwedge_{p\in At(\varphi)\setminus S}\neg \mathsf{h}_ap)\in e$. Then by \textsc{Attentiveness} and $\bigwedge_{p\in S} \mathsf{h}_ap\in e$, we know that $\bigwedge_{p\in S} (\ell(p)\wedge \mathsf{h}_ap)\in f$. Moreover, as $\bigwedge_{p\in At(\varphi)\setminus S} \neg \mathsf{h}_ap\in e$, then by def. of event model for propositional attention (in particular by definition of its set of events) for all $p\in At(\varphi)\setminus S$, $\mathsf{h}_ap\notin e$ and so by \textsc{Inertia}, $f$ doesn't contain $\ell(p)$, for all $p\in At(\varphi)\setminus S$, which then also means that $\mathsf{h}_ap\notin f$ for all such $p\in At(\varphi)\setminus S$, by def. of event models for propositional attention. Hence, $f$ is such that $\bigwedge_{p\in S} (\ell(p)\wedge \mathsf{h}_ap)\in f$ as well as, for all $p\in At(\varphi)\setminus S, \mathsf{h}_ap,\ell(p)\notin f$. 
			
			Now let's see what is required to belong to $Q'_a[e']$. Since by initial assumption $(\mathcal{M},w)\vDash \bigwedge_{p\in S} \mathsf{h}_ap$ for some $S\subseteq At(\varphi)$ and since $(w,e')\in W^{\varphi_S}$, then by product update definition and maximality of $e'$, it holds that $\bigwedge_{p\in S} \mathsf{h}_ap \in e'$. Then we can use \textsc{Attentiveness} to see that in order to belong to $Q'_a[e']$ an event must contain $\bigwedge_{p\in S} (\ell(p)\wedge \mathsf{h}_ap)$. Moreover, since all events in $Q'_a[e']$ are events from $\mathcal{F}(\varphi_S)$ then they contain only literals and attention atoms from $\varphi_S$. So to belong to $Q'_a[e']$, and thus to $E'$, an event must not contain $\ell(p),\mathsf{h}_ap$, for all $p\in At(\varphi)\setminus S$. Hence, to belong to $Q'_a[e']$, an event $f'$ must be such $\bigwedge_{p\in S} (\ell(p)\wedge \mathsf{h}_ap)\in f'$ as well as, for all $p\in At(\varphi)\setminus S, \mathsf{h}_ap\notin f'$ and $\ell(p)\notin f'$. This is exactly what we have with $f$ and since \textsc{Attentiveness} and \textsc{Inertia} are the only requirements to satisfy to be part of $Q'_a[e']$, then and $f\in Q'_a[e']$. 
			
			Hence, we have that if $f\in Q_a[e]$ then $f\in Q'_a[e']$. Above we assumed that $(v,f)\in R^{\varphi}[(w,e)]$, i.e., that $v\in R_a[w]$ and $f\in Q_a[e]$. This now implies that $v\in R_a[w]$ and $f\in Q'_a[e']$, and since $(\mathcal{M},v)\vDash pre(f)$ and so $(v,f)\in W^{\varphi_S}$, then by def. of product update that $(v,f)\in R_a^{\varphi_S}[(w,e')]$.
			
			$(\Leftarrow)$ This proof proceed analogously to the above proof of the other inclusion.
			
We can conclude that $R_a^{\varphi}[(w,e)]= R_a^{\varphi_S}[(w,e')]$. \\
			(2) We now show that for all $(v,f)\in R_a^{\varphi}[(w,e)]$,  $(\mathcal{M}^\varphi,(v,f))\leftrightarroweq(\mathcal{M}^{\varphi_S},(v,f))$. Consider a bisimulation $\mathcal{Z}\subseteq (W^{\varphi}\times W^{\varphi_S})$ defined by $(u',g')\in \mathcal{Z}[(u,g)]$ iff $u=u'$ and $g=g'$ (recall that events are formulas, so $g=g'$ means that their preconditions are the same). We show that it satisfies the three requirements of bisimulations for Kripke models. Let $(u',g')\in \mathcal{Z}[(u,g)]$. 
			
			[Atom]:  Since $u=u'$, then clearly $(u,g),(u',g')$ satisfy the same atomic formulas, by def. of product update.
			
			[Forth]: Let $(t,h)\in R_b^\varphi[(u,g)]$, for some $b\in Ag$. We want to show that there exists a state $(t',h')\in W^{\varphi_S}$ such that $(t',h')\in R_b^{\varphi_S}[(u',g')]$ and $(t',h')\in \mathcal{Z}[(t,h)]$. 
			By def. of product update, since $(t,h)\in R_b^\varphi[(u,g)]$, then $t\in R_b[u]$ and $h\in Q_b[g]$. As by initial assumption $(u',g')\in \mathcal{Z}[(u,g)]$, then $g=g'$, that is, $g$ and $g'$ are the same formula. This implies, by \textsc{Attentiveness} and \textsc{Inertia} and by $h\in Q_b[g]$, that $h\in Q'_b[g']$ (the argument to see that this holds proceeds analogously to the argument given in (1), to show that if $f\in Q_a[e]$ then $f\in Q'_a[e']$). Moreover, as $(u',g')\in \mathcal{Z}[(u,g)]$ then $u=u'$, and since $t\in R_b[u]$ then clearly $t\in R_b[u']$. Since $(t,h) \in R^{\varphi}_b[(u,g)]$ then $(t,h) \in W^\varphi$, implying that the precondition of $h$ is satisfied in $t$. We now have $h\in Q'_b[g']$, $t\in R_b[u']$ and that the precondition of $h$ is satisfied in $t$ which, by product update definition, implies $(t,h) \in W^{\varphi_S}$ and $(t,h)\in R_b^{\varphi_S}[(u',g')]$. Letting $t'=t$ and $h'=h$, this proves the required.    
			
			[Back]: Analogous to the Forth condition.
			
			As by (1) we have $ R^\varphi_a[(w,e)]=R^{\varphi_S}_a[(w,e')]$, then by choice of bisimulation relation $\mathcal{Z}$ we can conclude that for all $(v,f)\in R^\varphi_a[(w,e)], (\mathcal{M}^\varphi,(v,f))\leftrightarroweq(\mathcal{M}^{\varphi_S},(v,f))$.
		\end{proof}
			\begin{theorem} \label{sound and complete} 
			The axiomatization in Tbl.~\ref{tab:logic} is sound and complete. 
		\end{theorem}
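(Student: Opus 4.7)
The plan is to prove soundness and completeness separately, following the standard DEL template in which completeness is reduced to the static fragment via the reduction axioms.

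For soundness, I would verify validity of each axiom and that each inference rule preserves validity. Propositional tautologies, the K axiom, modus ponens, necessitation, and the replacement-of-equivalents rule are standard. The three reduction axioms for $[\mathcal{F}(\varphi)]$ applied to $p$, $\neg\psi$, and $\psi\wedge\chi$ follow directly from Definitions~\ref{def: product update no post} and~\ref{def: truth}, together with the fact that $\mathcal{F}(\varphi)$ is applicable at $(\mathcal{M},w)$ precisely when $(\mathcal{M},w)\vDash\varphi$, since the designated events are exactly those maximal events whose precondition contains every literal of $\varphi$, and at any pointed model at most one such event has its precondition satisfied. The critical axiom is the belief reduction axiom, and this is where Lemma~\ref{cor1} does the heavy lifting: assuming $(\mathcal{M},w)\vDash\varphi$, take the unique $S\subseteq At(\varphi)$ with $(\mathcal{M},w)\vDash \bigwedge_{p\in S}\mathsf{h}_ap \wedge \bigwedge_{p\in At(\varphi)\setminus S}\neg\mathsf{h}_ap$; then part~(1) of the lemma gives equality of the $R_a$-successors of the designated world in the two updated models, and part~(2) ensures that $\psi$ is satisfied at each such successor in one model iff it is in the other, by bisimulation invariance of $\mathcal{L}$-formulas. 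Combining these yields the desired equivalence for $[\mathcal{F}(\varphi)]B_a\psi$.

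For completeness, I would follow the reduction-to-static strategy. The goal is to show that every $\varphi\in\mathcal{L}$ is provably equivalent in $\Lambda$ to a formula $\varphi^{\ast}$ in the static fragment built from $At\cup H$ using $\neg,\wedge,B_a$. I would do this by induction on a complexity measure $c$ on $\mathcal{L}$, chosen so that for every reduction axiom $\alpha\leftrightarrow\beta$ in Table~\ref{tab:logic} we have $c(\beta)<c(\alpha)$, using the replacement rule to push reductions under arbitrary contexts. The standard van Ditmarsch/van Benthem complexity measure applies, provided it is set up so that $c([\mathcal{F}(\varphi)]B_a\psi)$ strictly exceeds $c(B_a[\mathcal{F}(\bigwedge_{p\in S}\ell(p))]\psi)$ for every $S\subseteq At(\varphi)$, including the extremal case $S=At(\varphi)$ in which the inner announced formula is again $\varphi$. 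The usual trick is to let the weight of $[\mathcal{F}(\varphi)]$ multiply the complexity of its argument rather than merely add to it, so that swapping $[\mathcal{F}(\varphi)]$ past $B_a$ strictly decreases the measure because $B_a\psi$ is strictly more complex than $\psi$. Once every formula is so reduced, completeness follows from the completeness of the static fragment, which is the multi-modal normal logic over $At\cup H$ axiomatized by propositional tautologies, K, modus ponens, and necessitation, and which is complete with respect to all Kripke models via the standard canonical-model construction.

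The main obstacle is the design of $c$ and the verification that every reduction axiom strictly decreases it, because the right-hand side of the belief reduction axiom still contains a dynamic modality and, in one case, that modality announces the very same $\varphi$ appearing on the left. Once this complexity measure is in place and Lemma~\ref{cor1} is invoked for the belief case, the remaining soundness checks for the boolean reduction axioms and the canonical-model completeness of the static fragment are essentially routine.
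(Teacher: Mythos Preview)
Your proposal is correct and follows essentially the same approach as the paper: soundness via Lemma~\ref{cor1} for the belief reduction axiom (plus standard arguments for the rest), and completeness via the usual reduction-to-the-static-fragment strategy. The paper in fact dispatches completeness in a single sentence (``It proceeds by usual reduction arguments''), whereas you spell out the complexity-measure argument and correctly flag the delicate case $S=At(\varphi)$ where the inner announcement equals $\varphi$; your multiplicative-weight remedy is exactly the standard fix, so nothing is missing there.
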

		\begin{proof} 
			\emph{Completeness:} It proceeds by usual reduction arguments \cite{ditmarsch2007dynamic}. 
			\emph{Soundness:} We show that axioms and inferences rules from Table~\ref{tab:logic} are valid. Axioms and inference rules for normal modal logic are valid in pointed Kripke models, by standard results \cite{blac.ea:moda}. As our product update is of the state-eliminating kind, the propositional reduction axioms are valid  \cite{ditmarsch2007dynamic}. Thus, we only need to show the validity of the reduction axiom for attention-based belief updates. We prove the two directions separately.
			
			Let $(\mathcal{M},w)=((W,R,V),w)$ be a pointed Kripke model. We use the same notation as in the previous proof for $\varphi_S$, for $\mathcal{F}(\varphi)$ and $\mathcal{F}(\varphi_S)$, and for the updates $(\mathcal{M}^{\varphi}, (w,e))$ and $(\mathcal{M}^{\varphi_S}, (w,e'))$.
			
			($\Rightarrow$) In this direction we want to prove that if we assume $(\mathcal{M},w)\vDash [\mathcal{F}(\varphi)]B_a\psi$ for some arbitrary $a\in Ag$, then it follows that $(\mathcal{M},w)\vDash \varphi\rightarrow \bigvee_{S\subseteq At(\varphi)} ((\bigwedge_{p\in S} \mathsf{h}_ap\wedge \bigwedge_{p\in At(\varphi)\setminus S} \neg \mathsf{h}_ap)\rightarrow B_a (\bigwedge_{p\in S} \ell(p)\rightarrow [\mathcal{F}(\varphi_S)]\psi))$. We will show that the claim follows straightforwardly from Lemma \ref{cor1}.
			Let $(\mathcal{M},w)\vDash [\mathcal{F}(\varphi)]B_a\psi$  for some arbitrary $a\in Ag$, let $(\mathcal{M},w)\vDash \varphi$ and let $S\subseteq At(\varphi)$ be the unique $S$ such that $(\mathcal{M},w)\vDash \bigwedge_{p\in S} \mathsf{h}_ap\wedge \bigwedge_{p\in At(\varphi)\setminus S} \neg \mathsf{h}_ap$. As $(\mathcal{M},w)\vDash \varphi$ then $\mathcal{F}(\varphi)$ is applicable in $(\mathcal{M},w)$ and $(\mathcal{M}^{\varphi}, (w,e))$ and $(\mathcal{M}^{\varphi_S}, (w,e'))$ exist.
			As $(\mathcal{M},w)\vDash [\mathcal{F}(\varphi)]B_a\psi$, then we know, by semantics of the dynamic modality and by applicability of $\mathcal{F}(\varphi)$ to $(\mathcal{M},w)$, that $(\mathcal{M}^\varphi,(w,e))\vDash B_a\psi$, and so, by semantics of belief modality, for all $(v,f)\in R^\varphi_a[(w,e)], (\mathcal{M}^\varphi,(v,f))\vDash \psi$. As our assumptions here are the same assumptions made in Lemma \ref{cor1}, we can then use that lemma to obtain that $R_a^{\varphi}[(w,e)]= R_a^{\varphi_S}[(w,e')]$ and that for all $(v,f)\in R_a^{\varphi}[(w,e)]$, $(\mathcal{M}^\varphi,(v,f))\leftrightarroweq(\mathcal{M}^{\varphi_S},(v,f))$. By standard results, bisimulation implies modal equivalence (see e.g., \cite{blac.ea:moda}). Hence, 
			it follows that for all $(v,f)\in R^{\varphi_S}_a[(w,e')]$, $(\mathcal{M}^{\varphi_S},(v,f))\vDash \psi$. This means that for all $v\in R_a[w]$ and all $f\in Q'_a[e']$ that are such that $(v,f)\in W^{\varphi_S}$, $(\mathcal{M}^{\varphi_{S}},(v,f))\vDash \psi$.
			
			Now we have two cases: for any $v\in R_a[w]$, either $\mathcal{F}(\varphi_S)$ is applicable in $(\mathcal{M},v)$ or it is not. If it is not applicable, we can directly conclude that $(\mathcal{M},v)\vDash [\mathcal{F}(\varphi_S)]\psi$, by semantics of dynamic modality, and since this holds for an arbitrary $v\in R_a[w]$, then $(\mathcal{M},w)\vDash B_a([\mathcal{F}(\varphi_S)]\psi)$, by semantics of belief modality. Now consider the case in which $\mathcal{F}(\varphi_S)$ is applicable in $(\mathcal{M},v)$. In this case, we need to show that for any  $f\in Q'_a[e']$ with $(v,f)\in W^\varphi$, $f$ is maximal, i.e., $f\in E'_d$, to then be able to infer, by semantics of dynamic modality, that for all $v\in R_a[w]$, $(\mathcal{M},v)\vDash [\mathcal{F}(\varphi_S)]\psi$. To that goal notice that since $(\mathcal{M},w)\vDash \bigwedge_{p\in S}\mathsf{h}_ap$, then by maximality of $e'$ with respect to $\varphi_S$ and product update definition, $\bigwedge_{p\in S}\mathsf{h}_ap\in e'$, and so by \textsc{Attentiveness} $\bigwedge_{p\in S} \ell(p)\in f$, for all $f\in Q'_a[e']$. So $f$ is indeed maximal with respect to $\varphi_S$ and thus $f\in E'_d$. 
			Hence, we have that for all $v\in R_a[w]$, $(\mathcal{M},v)\vDash [\mathcal{F}(\varphi_S)]\psi$, which by semantics of belief modality implies that $(\mathcal{M},w)\vDash B_a( [\mathcal{F}(\varphi_S)]\psi)$, as we wanted to conclude.
			
			($\Leftarrow$) For this other direction, the goal is showing that by assuming $(\mathcal{M},w)\vDash \varphi \rightarrow \bigvee_{S\subseteq At(\varphi)}(( \bigwedge_{p\in S}\mathsf{h}_ap \wedge \bigwedge_{p\in At(\varphi)\setminus S}\neg \mathsf{h}_ap) \rightarrow B_a( [\mathcal{F}(\varphi_S)]\psi))$ we can conclude that $(\mathcal{M},w)\vDash [\mathcal{F}(\varphi)]B_a\psi$. Here we proceed by contraposition and so show that by assuming $(\mathcal{M},w)\not\vDash [\mathcal{F}(\varphi)]B_a\psi$, i.e., by assuming that $\mathcal{F}(\varphi)$ is applicable in $(\mathcal{M},w)$ but $(\mathcal{M}^\varphi,(w,e))\not\vDash B_a\psi$, we can conclude that $(\mathcal{M},w)\not\vDash $ $\varphi \rightarrow \bigvee_{S\subseteq At(\varphi)}(( \bigwedge_{p\in S}\mathsf{h}_ap \wedge \bigwedge_{p\in At(\varphi)\setminus S}\neg \mathsf{h}_ap) \rightarrow B_a( [\mathcal{F}(\varphi_S)]\psi))$, i.e., we can conclude that if $(\mathcal{M},w)\vDash \varphi $ then $(\mathcal{M},w)\not\vDash \bigvee_{S\subseteq At(\varphi)}((\bigwedge_{p\in S}\mathsf{h}_ap \wedge \bigwedge_{p\in At(\varphi)\setminus S}\neg \mathsf{h}_ap) \rightarrow B_a( [\mathcal{F}(\varphi_S)]\psi))$, which means concluding that if $(\mathcal{M},w)\vDash \bigvee_{S\subseteq At(\varphi)}(\bigwedge_{p\in S}\mathsf{h}_ap \wedge \bigwedge_{p\in At(\varphi)\setminus S}\neg \mathsf{h}_ap)$ then $(\mathcal{M},w) \not\vDash  B_a([\mathcal{F}(\varphi_S)]\psi)$, which again means that there exists a $v\in R_a[w]$ with  $(\mathcal{M},v)\not\vDash [\mathcal{F}(\varphi_S)]\psi$, i.e., $\mathcal{F}(\varphi_S)$ is applicable in $(\mathcal{M},v)$  but $(\mathcal{M},v)\not\vDash \psi$. 
			Also here the conclusion will follow straightforwardly by using Lemma \ref{cor1}.
			
			So we start by making all the stated assumptions. Let $\mathcal{F}(\varphi)$ be applicable to $(\mathcal{M},w)$ and let $(\mathcal{M}^\varphi,(w,e))\not\vDash B_a\psi$, for some $a\in Ag$. Moreover, let $(\mathcal{M},w)\vDash \varphi$ and let $S\subseteq At(\varphi)$ be the unique $S$ such that $(\mathcal{M},w)\vDash \bigwedge_{p\in S}\mathsf{h}_ap \wedge \bigwedge_{p\in At(\varphi)\setminus S}\neg \mathsf{h}_ap$. The goal is to show that for this particular $S$, we also have $(\mathcal{M},w)\not\vDash B_a([\mathcal{F}(\varphi_S) ]\psi)$. 
			As by assumption the event model $\mathcal{F}(\varphi)$ is applicable in $(\mathcal{M},w)$, then also the event model $\mathcal{F}(\varphi_S)$ is applicable in $(\mathcal{M},w)$, and $(\mathcal{M}^{\varphi_S},(w,e'))$ exists. 
			
			Now $(\mathcal{M}^\varphi,(w,e))\not\vDash B_a\psi$ implies by semantics of belief modality that there exists some $(v,f)\in R^\varphi_a[(w,e)]$ such that $(\mathcal{M}^\varphi,(v,f))\not\vDash\psi$. As the assumptions of Lemma~\ref{cor1} are satisfied here, then $R_a^{\varphi}[(w,e)]= R_a^{\varphi_S}[(w,e')]$, and all the $(v,f)\in R_a^{\varphi}[(w,e)]$ are such that $\mathcal{M}^{\varphi_S},(v,f)\leftrightarroweq \mathcal{M}^{\varphi},(v,f)$. As modal equivalence follows by standard results on bisimulation and Kripke models (see e.g., \cite{blac.ea:moda}), then it follows that there exists some $(v,f)\in R^{\varphi_S}_a[(w,e')]$ such that $(\mathcal{M}^{\varphi_S},(v,f))\not\vDash\psi$. This means that there exists some $v\in R_a[w]$ and $f\in Q'_a[e']$ such that $(\mathcal{M}^{\varphi_S},(v,f))\not\vDash\psi$. As $(\mathcal{M},w)\vDash \bigwedge_{p\in S}\mathsf{h}_ap$, 
			then $\bigwedge_{p\in S}\mathsf{h}_ap\in e'$ and by \textsc{Attentiveness} $\bigwedge_{p\in S}(\mathsf{h}_ap\wedge \ell(p))\in f$ for all $f\in Q'_a[e']$. So $f$ is maximal with respect to $\varphi_S$ and thus $f\in E'_d$. It was necessary to show maximality of $f$ here as we now know that $\mathcal{F}(\varphi_S)$ is applicable in $(\mathcal{M},v)$ and so we know, by semantics of dynamic modality, that there exists some $v\in R_a[w]$ that is such that  $(\mathcal{M},v)\not\vDash [\mathcal{F}(\varphi_S)]\psi$. So we have that $(\mathcal{M},v)\vDash \bigwedge_{p\in S}\ell(p)$ and $(\mathcal{M},v)\not\vDash [\mathcal{F}(\varphi_S)]\psi$, that is $(\mathcal{M},v)\not\vDash \bigwedge_{p\in S}\ell(p)\rightarrow [\mathcal{F}(\varphi_S)]\psi$. Hence, by $v\in R_a[w]$, we can conclude that $(\mathcal{M},w)\not\vDash B_a[\mathcal{F}(\varphi_S)]\psi$.
	\end{proof}
	
	\section{Defaults}
	
	In event models for propositional attention, inattentive agents maintain their beliefs about what has been announced but they did not attend. Then, agents like Ann, who didn't hold any particular belief about the gorilla before watching the video and did not notice any while watching it, will not have any particular belief about it after having watched the video either. While this specific way of updating beliefs may be realistic and even rational in some cases, in many others, humans seem to update differently. As said in the introduction, in inattentional blindness situations agents that did not pay attention to an event and received no information about it often believe that the event did not happen. In these situations, agents seem to update their beliefs with respect to unattended events as well, regardless of whether their experience of the situation actually contained any evidence about them.

	In this section we propose to account for these specific belief updates by introducing \emph{default values}. A default value for an atom $q$ is either $q$, $\neg q$ or $\top$. If $q$ has default value $q$ for agent $a$ in a given announcement, it means that, in lack of evidence about $q$, agent $a$ will believe $q$ to be true. If $q$ means ``the basketball players are wearing shoes'', then an agent seeing the video might start to believe $q$ even without actually having paid attention to $q$, but just assuming $q$ to be true, as it would normally be true in such circumstances. 
	Similarly, if $q$ means ``a gorilla is passing by'', then agent $a$ might have $\neg q$ as the default value: if the occurrence of a gorilla is not paid attention to, the agent will believe there was none. Finally, if $q$ takes default value $\top$, it means that the agent doesn't default to any value, but preserves her previous beliefs. 
	Maybe she has no strong beliefs about whether all the basket ball players are wearing white, and hence if $q$ denotes that they are all wearing white, her default value for $q$ would be $\top$. We can think of default values as representing some kind of qualitative priors: They encode what an agent believes about what normally occurs in a given situation, and where those beliefs are sufficiently strong to let agent update her beliefs
	using these priors even when no direct evidence for or against them is observed (paid attention to).  
	\begin{definition}[Default Event Model $\mathcal{E}(\varphi, d)$]\label{event-default-varphi} Suppose $\varphi=\ell(p_1)\wedge \dots \wedge \ell(p_n)$, and suppose that $d$ is a \emph{default map}: 
		to each agent $a$ and atom $p_i$, $d$ assigns a \emph{default value} $d_a(p_i) \in \{ p_i, \neg p_i, \top \}$.
		The \emph{default event model} $\mathcal{E}(\varphi, d)=((E,Q,id_{E}), E_d)$ is: 
		\begin{multline*}
			E=\{\bigwedge_{p \in S} \ell(p)\ \wedge\!\!\!\bigwedge_{p \in \mathit{At}(\varphi)\setminus S} \!\!d_b(p)\ \wedge \bigwedge_{a \in Ag} \bigl(
			\bigwedge_{p \in X_a} \!\!\mathsf{h}_a p\ \wedge \bigwedge_{
				p \in S \setminus X_a} \!\!\neg \mathsf{h}_a p 
			\bigr) \colon \\b \in Ag, S \subseteq \mathit{At}(\varphi) \text{ and for all }a \in Ag, X_a \subseteq S \}
		\end{multline*}

		
		$Q_a$ is such that $(e,f)\in Q_a$ iff all the following hold for all $p$: 
		\begin{itemize}
			\item[-] 
			\textsc{Attentiveness}: if $\mathsf{h}_a p\!\in\! e$ then $\mathsf{h}_a p,\ell(p)\!\in\! f$;
			\item[-] \textsc{Defaulting}: if $\mathsf{h}_a p \notin e$ then $d_a(p)\in f.$
		\end{itemize} 
		
		$E_d=\{\psi\in E\colon \ell(p)\in \psi, \text{ for all } \ell(p)\in \varphi\}$. 
		
	\end{definition}
	
	Default event models differ from event models for propositional attention in that if an event in a default model does not contain a literal from the announced formula, then it contains its default value for one of the agents. Each event contains default values for one agent only, so that no event may contain contradicting default values. The accessibility relations are given by similar principles as above, with the difference that the second principle is now called \textsc{Defaulting}, and this principle implies that inattentive agents only consider possible the default values of what they left unattended. Note that defaults are common knowledge among the agents (the event model doesn't encode any uncertainty about the default map $d$). 
Figure~\ref{figure:update} (right) illustrates the revised update of our initial model with the default event model representing Ann seeing the video.
In lack of attention to $g$, she defaults to $\neg g$, the intuition being that she believes that she would see the gorilla had it been there. She comes to believe there is no gorilla: $(\mathcal{M}'',w'')\vDash B_a \neg g$.

\paragraph{Axiomatization} 

The axiomatization of the logic for propositional attention with defaults is given by the same axioms as in Table~\ref{tab:logic}, except  for the axiom for belief dynamics which is replaced by the following axiom where inattentive agents adopt the default option for the unattended atoms (where $\varphi = \ell(p_1)\wedge \dots \wedge \ell(p_n)$). For $\varphi_{Sd}=\bigwedge_{p\in S} \ell(p)\wedge \bigwedge_{p\in At(\varphi)\setminus S} d_a(p)$, call the resulting table \emph{Table~2}:

\begin{multline*}
[\mathcal{E}(\varphi, d)]B_a\psi \leftrightarrow (\varphi \rightarrow \bigvee_{S\subseteq At(\varphi)}\bigl( (\bigwedge_{p\in S} \mathsf{h}_ap\wedge \bigwedge_{p\in At(\varphi)\setminus S}\neg \mathsf{h}_ap)\\\rightarrow B_a( [\mathcal{E}(\varphi_{Sd},d) ]\psi))\bigr)
\end{multline*}

To prove 
soundness and completeness,
we need a lemma similar to Lemma~\ref{cor1}.

\begin{lemma} \label{lemma3}For any pointed Kripke model $(\mathcal{M},w)$ with $(\mathcal{M},w)\vDash \varphi$, and for any $a\in Ag$, consider the $S\subseteq At(\varphi)$ that is such that $(\mathcal{M},w)\vDash \bigwedge_{p\in S} \mathsf{h}_ap \wedge \bigwedge_{p\in At(\varphi)\setminus S}\neg \mathsf{h}_ap$. Let $\varphi_{Sd} = \bigwedge_{p\in S}\ell(p)\wedge\bigwedge_{p\in At(\varphi)\setminus S}d_a(p)$. The updated models $(\mathcal{M},w)\otimes \mathcal{E}(\varphi_{Sd}, d)=((W^{\varphi_{Sd}}, R^{\varphi_{Sd}}, V^{\varphi_{Sd}}),(w,e'))$ and $(\mathcal{M},w)\otimes \mathcal{E}(\varphi,d)=((W^{\varphi}, R^{\varphi}, V^{\varphi}),(w,e))$ are such that \begin{enumerate}
		\item $R_a^{\varphi}[(w,e)]= R_a^{\varphi_{Sd}}[(w,e')]$
		\item For all $(v,f)\in R_a^{\varphi}[(w,e)]$, $(\mathcal{M}^\varphi,(v,f))\leftrightarroweq(\mathcal{M}^{\varphi_{Sd}},(v,f))$.
	\end{enumerate}
\end{lemma}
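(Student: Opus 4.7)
The plan is to adapt the proof of Lemma~\ref{cor1} almost verbatim, with \textsc{Defaulting} taking the role of \textsc{Inertia}. First I would observe that since $(\mathcal{M}, w) \vDash \varphi$, both $\mathcal{E}(\varphi, d)$ and $\mathcal{E}(\varphi_{Sd}, d)$ are applicable at $w$, producing $(\mathcal{M}^\varphi, (w, e))$ and $(\mathcal{M}^{\varphi_{Sd}}, (w, e'))$. Applicability and maximality pin down the designated events: $e$ carries every $\ell(p)$ for $p \in At(\varphi)$ together with the attention information $\bigwedge_{p \in S}\mathsf{h}_a p \wedge \bigwedge_{p \in At(\varphi)\setminus S}\neg \mathsf{h}_a p$ (and analogous content for the other agents), while $e'$ carries every $\ell(p)$ for $p \in S$, every $d_a(p)$ for $p \in At(\varphi) \setminus S$, and $\bigwedge_{p \in S}\mathsf{h}_a p$.

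For part~(1), I would show the two inclusions separately, mirroring the argument in Lemma~\ref{cor1}. Taking $(v, f) \in R_a^{\varphi}[(w, e)]$, we have $v \in R_a[w]$ and $f \in Q_a[e]$. Since $\mathsf{h}_a p \in e$ iff $p \in S$, \textsc{Attentiveness} forces $\mathsf{h}_a p, \ell(p) \in f$ for every $p \in S$, and \textsc{Defaulting} forces $d_a(p) \in f$ for every $p \in At(\varphi) \setminus S$. Modulo the $\top$-conjuncts that vanish under the normal-form convention, these are exactly the syntactic constraints placed by \textsc{Attentiveness} and \textsc{Defaulting} on members of $Q'_a[e']$ in $\mathcal{E}(\varphi_{Sd}, d)$, since $e'$ carries the same attention info for $a$. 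Hence $f$ also belongs to $Q'_a[e']$, and $(\mathcal{M}, v) \vDash pre(f)$ places $(v, f)$ in $R_a^{\varphi_{Sd}}[(w, e')]$. The reverse inclusion is symmetric.

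For part~(2), I would use the same bisimulation as in Lemma~\ref{cor1}: relate $(u, g) \in W^\varphi$ with $(u, g) \in W^{\varphi_{Sd}}$ whenever both worlds exist and $g$ names the same precondition formula in both event models. The Atom clause is immediate from the product-update definition. For Forth, given $(t, h) \in R_b^\varphi[(u, g)]$ for an arbitrary agent $b$, the constraints that \textsc{Attentiveness} and \textsc{Defaulting} impose on $h$ depend only on the literals appearing in $g$ and on $d_b$; they are indifferent to whether $g$ is viewed as an event of the large or the small model. Hence $h \in Q'_b[g]$ in $\mathcal{E}(\varphi_{Sd}, d)$ as well, and combined with $t \in R_b[u]$ and $(\mathcal{M}, t) \vDash pre(h)$, this yields the required matching successor. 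Back is symmetric.

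The main obstacle I anticipate is bookkeeping around the mismatch between $At(\varphi)$ and $At(\varphi_{Sd})$: atoms with $d_a(p) = \top$ drop out of $\varphi_{Sd}$ entirely, so the two event models do not literally share event sets. I would lean on the normal-form convention that treats conjunctions modulo $\top$-conjuncts, so that every event reachable from $e$ under the edge principles coincides, as a formula, with an event reachable from $e'$. A further subtlety is that the bisimulation argument must go through for every agent $b$, not only the distinguished $a$; this is handled by the observation that the edge principles are local, purely syntactic conditions on source/target formulas, so once the source $g$ is shared, the same set of target events is carved out in both models.
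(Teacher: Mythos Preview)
Your proposal is correct and follows essentially the same route as the paper: the paper explicitly says both parts ``proceed analogously to the proofs of (1) and (2) of Lemma~\ref{cor1}'', spells out only the left-to-right inclusion of (1) with \textsc{Defaulting} replacing \textsc{Inertia}, and uses the identity bisimulation for (2). Your extra remarks on the $\top$-default bookkeeping and on the edge principles being agent-local are sound refinements that the paper leaves implicit under its normal-form convention.
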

\begin{proof} The proofs of both (1) and (2) proceed analogously to the proofs of (1) and (2) of Lemma \ref{cor1}, respectively. We hence only show left to right of (1). We follow similar notational conventions as in Lemma~\ref{cor1}, letting $\mathcal{E}(\varphi,d)=((E,Q,pre),E_d)$ and $\mathcal{E}(\varphi_{Sd},d)=((E',Q',pre'),E'_d)$.
	
	Let $(v,f)\in R^{\varphi}[(w,e)]$. This means that $v\in R_a[w]$ and $f\in Q_a[e]$.  Then, to reach the desired result that $(v,f)\in R^{\varphi_{Sd}}[(w,e')]$, we only need to show that $f\in Q'_a[e']$, as then we would have that $v\in R_a[w]$ and $f\in Q'_a[e']$, and since $(\mathcal{M},v)\vDash pre(f)$ then $(v,f)\in W^{\varphi_S}$ and we could conclude that $(v,f)\in R^{\varphi_{Sd}}[(w,e')]$. Similarly to the proof above, we show this by showing that $f$ is such that $f\in E'$ and that $f$ satisfies the requirements that \textsc{Attentiveness} and \textsc{Defaulting} pose to belong to $Q'_a[e']$, i.e., it contains the needed formulas.

	So let's first see what formulas $f$ contains. By initial assumption, $(\mathcal{M},w)\vDash (\bigwedge_{p\in S} \mathsf{h}_ap \wedge \bigwedge_{p\in At(\varphi)\setminus S}\neg \mathsf{h}_ap)$ for some $S\subseteq At(\varphi)$. As $(w,e)\in W^\varphi$, then by product update definition and maximality of $e$, it holds that $(\bigwedge_{p\in S} \mathsf{h}_ap \wedge \bigwedge_{p\in At(\varphi)\setminus S}\neg \mathsf{h}_ap)\in e$. Then by \textsc{Attentiveness} and $\bigwedge_{p\in S} \mathsf{h}_ap\in e$, we know that $\bigwedge_{p\in S} (\ell(p)\wedge \mathsf{h}_ap)\in f$. Moreover, as $\bigwedge_{p\in At(\varphi)\setminus S} \neg \mathsf{h}_ap\in e$, then by def. of event model for propositional attention with defaults (in particular by definition of its set of events) for all $p\in At(\varphi)\setminus S$, $\mathsf{h}_ap\notin e$ and so by \textsc{Defaulting}, $f$ contains $\bigwedge_{p\in At(\varphi)\setminus S}d_a(p)$, which then implies that $\mathsf{h}_ap\notin f$ for all such $p\in At(\varphi)\setminus S$, by def. of event models for propositional attention with defaults. Hence, $f$ is such that $\bigwedge_{p\in S} (\ell(p)\wedge \mathsf{h}_ap) \land \bigwedge_{p\in At(\varphi)\setminus S}d_a(p)\in f$ and, for all $p\in At(\varphi)\setminus S$, $\mathsf{h}_ap\notin f$. 

	Now let's see what is required to belong to $Q'_a[e']$. Since by initial assumption $(\mathcal{M},w)\vDash \bigwedge_{p\in S} \mathsf{h}_ap$ and since $(w,e')\in W^{\varphi_{Sd}}$, then by product update definition and maximality of $e'$, it holds that $\bigwedge_{p\in S} \mathsf{h}_ap \in e'$. Then we can use \textsc{Attentiveness} to see that in order to belong to $Q'_a[e']$ an event must contain $\bigwedge_{p\in S} (\ell(p)\wedge \mathsf{h}_ap)$. Moreover, as $\bigwedge_{p\in At(\varphi)\setminus S} \neg \mathsf{h}_ap\in e$, then by \textsc{Defaulting}, all events in $Q'_a[e']$ must contain $\bigwedge_{p\in At(\varphi)}d_a(p)$ which implies, by the way events with defaults are defined, that they must not contain $\mathsf{h}_ap$ for all such $p\in At(\varphi)\setminus S$. Hence, to belong to $Q'_a[e']$, an event $f'$ must be such $\bigwedge_{p\in S} (\ell(p)\wedge \mathsf{h}_ap)\wedge \bigwedge_{p\in At(\varphi)\setminus S} d_a(p) \in f'$ as well as, for all $p\in At(\varphi)\setminus S, \mathsf{h}_ap\notin f'$. As this is exactly what we have with $f$, then $f\in E'$ and $f\in Q'_a[e']$.

	Hence, we have that if $f\in Q_a[e]$ then $f\in Q'_a[e']$. Above we assumed that $(v,f)\in R^{\varphi}[(w,e)]$, i.e., that $v\in R_a[w]$ and $f\in Q_a[e]$. This now implies that $v\in R_a[w]$ and $f\in Q'_a[e']$, and by def. of product update that $(v,f)\in R_a^{\varphi_{Sd}}[(w,e')]$, which is what we wanted to conclude.
\end{proof}

Recalling that we call Table 2 the table resulting from replacing the axiom for the belief dynamics in Table 1 with the new axiom for the defaults introduced in the beginning of this section, we now have the following.
\begin{theorem} \label{default sound and complete} 
	The axiomatization in Tbl. 2 is sound and complete.
\end{theorem}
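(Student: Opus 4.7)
The plan is to mirror the proof of Theorem~\ref{sound and complete} almost verbatim, replacing Lemma~\ref{cor1} with Lemma~\ref{lemma3}. Completeness follows by the usual DEL reduction arguments~\cite{ditmarsch2007dynamic}: the non-dynamic axioms and the propositional reduction axioms are unchanged from Table~\ref{tab:logic}, and the new default-belief axiom again expresses $[\mathcal{E}(\varphi,d)]B_a\psi$ in terms of strictly simpler dynamic formulas, so a translation of every formula into the static base logic works exactly as before. Soundness then reduces to verifying the single new axiom, since the remaining axioms and rules were already shown sound in Theorem~\ref{sound and complete}.

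For the left-to-right direction of the new axiom, I would assume $(\mathcal{M},w)\vDash [\mathcal{E}(\varphi,d)]B_a\psi$ together with $(\mathcal{M},w)\vDash \varphi$, and let $S\subseteq At(\varphi)$ be the unique attention profile with $(\mathcal{M},w)\vDash \bigwedge_{p\in S}\mathsf{h}_ap \wedge \bigwedge_{p\in At(\varphi)\setminus S}\neg \mathsf{h}_ap$. From the semantics of the dynamic modality I obtain $(\mathcal{M}^\varphi,(w,e))\vDash B_a\psi$, so $(\mathcal{M}^\varphi,(v,f))\vDash \psi$ for every $(v,f)\in R_a^\varphi[(w,e)]$. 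Lemma~\ref{lemma3} gives $R_a^{\varphi}[(w,e)]=R_a^{\varphi_{Sd}}[(w,e')]$ with pointwise bisimulation of the updated models, and since bisimulation implies modal equivalence every $(v,f)\in R_a^{\varphi_{Sd}}[(w,e')]$ also satisfies $\psi$. To lift this to a statement about $B_a[\mathcal{E}(\varphi_{Sd},d)]\psi$ in $(\mathcal{M},w)$, I would split on whether $\mathcal{E}(\varphi_{Sd},d)$ is applicable at each $v\in R_a[w]$, the only delicate case being applicability, where maximality of $f$ with respect to $\varphi_{Sd}$ is required: \textsc{Attentiveness} applied to $\bigwedge_{p\in S}\mathsf{h}_ap\in e'$ supplies $\bigwedge_{p\in S}\ell(p)\in f$, and \textsc{Defaulting} applied to the negative attention literals supplies $\bigwedge_{p\in At(\varphi)\setminus S}d_a(p)\in f$, hence $f\in E'_d$.

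The right-to-left direction proceeds by contraposition: from $\mathcal{E}(\varphi,d)$ being applicable in $(\mathcal{M},w)$ together with $(\mathcal{M}^\varphi,(w,e))\not\vDash B_a\psi$, pick a witness $(v,f)\in R_a^\varphi[(w,e)]$ with $(\mathcal{M}^\varphi,(v,f))\not\vDash\psi$, use Lemma~\ref{lemma3} together with modal equivalence to transfer it to $R_a^{\varphi_{Sd}}[(w,e')]$, and apply the same maximality argument for $f$ to conclude $(\mathcal{M},v)\not\vDash [\mathcal{E}(\varphi_{Sd},d)]\psi$, and hence $(\mathcal{M},w)\not\vDash B_a[\mathcal{E}(\varphi_{Sd},d)]\psi$ for the unique $S$ encoding $a$'s attention at $w$. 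The main subtlety, as in Theorem~\ref{sound and complete}, is precisely the maximality check for $f$, which now requires \emph{both} edge principles rather than only \textsc{Attentiveness}: unattended atoms no longer leave $f$ free (as they did under \textsc{Inertia} in Lemma~\ref{cor1}) but are instead forced to carry default literals into $f$, and those defaults are exactly the extra conjuncts of $\varphi_{Sd}$ that turn $f$ into a designated event.
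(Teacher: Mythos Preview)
Your proposal is correct and follows essentially the same approach as the paper's own proof: completeness by reduction, soundness of the new axiom via Lemma~\ref{lemma3} and the applicability/maximality case split, with the right-to-left direction by contraposition. You even identify the one genuine novelty relative to Theorem~\ref{sound and complete}, namely that maximality of $f$ now requires invoking \textsc{Defaulting} in addition to \textsc{Attentiveness}.
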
	
\begin{proof}

	\noindent\textit{Completeness:} It proceeds by usual reduction arguments \cite{ditmarsch2007dynamic}. 
	\textit{Soundness:} We show that axioms and inferences rules from Table~2 are valid. Using the same reasoning as in the previous soundness proof, we only show here the validity of the reduction axiom for attention-based belief updates with defaults. We prove the two directions separately.
	
	Let $(\mathcal{M},w)=((W,R,V),w)$ be a pointed Kripke model. We will use $\varphi_{Sd}$ in the same way as above, and we will use also the same notation for $\mathcal{E}(\varphi,d)$ and $\mathcal{E}(\varphi_{Sd},d)$, as well as for $(\mathcal{M}^\varphi,(w,e))$ and $(\mathcal{M}^{\varphi_{Sd}},(w,e))$.
	
	($\Rightarrow$) 	We want to prove that if we assume $(\mathcal{M},w)\vDash 	[\mathcal{E}(\varphi, d)]B_a\psi$, $(\mathcal{M},w)\vDash \varphi $ and  $(\mathcal{M},w)\vDash \bigwedge_{p\in S} \mathsf{h}_ap\wedge \bigwedge_{p\in At(\varphi)\setminus S}\neg \mathsf{h}_ap$, then it follows that $(\mathcal{M},w)\vDash B_a[\mathcal{E}(\varphi_{Sd},d) ]\psi$. The proof strategy is analogous to the strategy of the previous soundness proof in the left to right direction. 
	
	So assume $(\mathcal{M},w)\vDash 	[\mathcal{E}(\varphi, d)]B_a\psi$ and $(\mathcal{M},w)\vDash \varphi $ and consider the unique $S\subseteq At(\varphi)$ that is such that $(\mathcal{M},w)\vDash \bigwedge_{p\in S} \mathsf{h}_ap\wedge \bigwedge_{p\in At(\varphi)\setminus S}\neg \mathsf{h}_ap$. As $(\mathcal{M},w)\vDash \varphi$ then $(\mathcal{M}^\varphi,(w,e))$ exists. As $(\mathcal{M},w)\vDash [\mathcal{E}(\varphi,d)]B_a\psi$ then by semantics of the dynamic modality and by applicability of $\mathcal{E}(\varphi,d)$ to $(M,w)$, $(\mathcal{M}^\varphi,(w,e))\vDash B_a\psi$, which implies, by semantics of belief modality, that for all $(v,f)\in R^\varphi_a[(w,e)], (\mathcal{M}^\varphi,(v,f))\vDash \psi$. By Lemma~\ref{lemma3}, we know that $R^\varphi_a[(w,e)]=R^{\varphi_{Sd}}_a[(w,e')]$ and that for all $(v,f)\in R^\varphi_a[(w,e)], (\mathcal{M}^\varphi, (v,f))\leftrightarroweq (\mathcal{M}^{\varphi_{Sd}}, (v,f))$. By standard modal logic results, bisimulation implies modal equivalence, and so it follows that also for all $(v,f)\in R^{\varphi_{Sd}}_a[(w,e')]$, it is the case that  $(\mathcal{M}^{\varphi_{Sd}},(v,f))\vDash \psi$, which is equivalent to saying that for all $v\in R_a[w]$ and for all $f\in Q'_a[e']$ that are such that $(v,f)\in W^{\varphi_{Sd}}$, $(\mathcal{M}^{\varphi_{Sd}},(v,f))\vDash \psi$.
	
	Now as in the previous soundness proof we have two cases: either $\mathcal{E}(\varphi_{Sd},d)$ is applicable to $(\mathcal{M},v)$ or it is not. If it is not, then $(\mathcal{M},v)\vDash [\mathcal{E}(\varphi_{Sd},d)]\psi$. If instead $\mathcal{E}(\varphi_{Sd},d)$ is applicable to $(\mathcal{M},v)$ we need to show maximality of $f$ for all such $f\in Q_a[e']$, i.e., $f\in E'_d$, to then infer by semantics of the dynamic modality, that $(\mathcal{M},v)\vDash [\mathcal{E}(\varphi_{Sd},d)]\psi$ for all $v\in R_a[w]$. The argument proceed similarly to the previous proof, namely, since $(\mathcal{M},w)\vDash \bigwedge_{p\in S} \mathsf{h}_ap\wedge \bigwedge_{p\in At(\varphi)\setminus S} \neg \mathsf{h}_ap$, then $\bigwedge_{p\in S}\mathsf{h}_ap\wedge \bigwedge_{p\in At(\varphi)\setminus S} \neg \mathsf{h}_ap\in e'$. By $\bigwedge_{p\in S}\mathsf{h}_ap\in e'$ we know that by \textsc{Attentiveness}, $\bigwedge_{p\in S}\ell(p)\in f$, and by $\bigwedge_{p\in At(\varphi)\setminus S} \neg \mathsf{h}_ap\in e'$ we know that by \textsc{Defaulting}, $\bigwedge_{p\in At(\varphi)\setminus S} d_a(p)\in f$, for all $f\in Q_a[e']$. Hence, $\bigwedge_{p\in S}\ell(p)\wedge \bigwedge_{p\in At(\varphi)\setminus S} d_a(p)\in f$, for all $f\in Q_a[e']$. This means that all such $f$ are indeed maximal with respect to $\varphi_{Sd}$ and so $f\in E'_d$. Hence, by semantics of dynamic modality, we now get that for all $v\in R_a[w]$, $(\mathcal{M},v)\vDash  [\mathcal{E}(\varphi_{Sd},d)]\psi$, and thus also that $(\mathcal{M},w)\vDash B_a ([\mathcal{E}(\varphi_{Sd},d)]\psi)$, as we wanted to conclude.
	
	($\Leftarrow$) The right to left direction proceeds similar to the right to left in the proof of Theorem 4.2, i.e., by using contraposition and Lemma~\ref{lemma3} we can conclude the desired result.
\end{proof}

\begin{example}\label{example:doctor}
In the introduction, we mentioned the potential application of our models for human-robot collaboration. Consider an emergency scenario with a mixed human-robot rescue team including a human doctor $a$ and an assisting robot $b$. Suppose $a$ is attending to an injured victim and that $b$ is ready to assist. While she is attending to the victim, fire breaks out and creates a dangerous situation. The doctor, being absorbed in trying to help the victim, has not noticed the fire, and so it makes sense for the robot to inform her. This scenario is completely equivalent to the invisible gorilla example with $p$ instead meaning, say, ``the victim is injured'' and
$g$ meaning ``fire has broken out''. The point is that the after fire has broken out, we are in the situation of Figure~\ref{figure:update} (right) where $g \land 
B_b B_a \neg g$ holds: The robot correctly believes that the doctor has a false belief that there is no fire. A proactive robot should inform its human team members about any false beliefs that could lead to catastrophic outcomes. This requires the ability of the robot to model those false beliefs, including false beliefs arising due to inattentional blindness, which is exactly what our models provide.   

\end{example}

\section{Syntactic event models} 
The event models introduced above are rather large. The event models for propositional attention grow exponentially with the number of agents: For each subset of agents $A \subseteq Ag$ and each announced atom $p$, it contains at least one event where all $\mathsf{h}_a p$, $a \in A$ occur positively, and all $\mathsf{h}_a p$, $a \in Ag \setminus A$ occur negatively. They also grow exponentially in the number of announced atoms: For each subset $S$ of atoms in the announced formula $\varphi$, it contains at least one event in which the set of propositional atoms occurring is exactly $S$. 
However, note that we still managed to represent the event models
in a relatively compact way in terms of a set of precondition formulas and a list of simple edge principles. This leads us to the following questions. Can we represent \emph{any} event model---or at least a sufficiently general subclass of them---in terms of a set of precondition formulas and a set of
edge principles? 
If so, can we then use this to define syntactically represented event models where the edges are defined by formulas representing the edge principles? This would give us a formally more precise way of handling principle-based event models. Would that then lead to more succinctly represented event models? 

We are not the first to consider ways to represent event models succinctly and syntactically. Aucher~\cite{aucher2012sequents} defined a language with special atoms $p'_\varphi$ meaning ``$\varphi$ is the precondition of the current event''. However, to be able to represent our edge principles via formulas, we need to be able to reason about the structure of the event preconditions, for instance when we want to say that some literal is contained in a precondition (like $\mathsf{h}_a p \in e$). Therefore it doesn't suffer for our purposes to introduce formulas where the preconditions are treated as atomic entities. Another approach is by Charrier and Schwarzentruber~\cite{charrier2017succinct}. In their language, it is possible to reason about the precondition formulas, for instance the formula $(p_e \to p) \land (p_f \to \top)$ can be used to express that event $e$ has precondition $p$ and event $f$ has precondition $\top$. They then represent edges by a program in PDL (propositional dynamic logic). This gives a very imperative representation of the edges, whereas we are here looking for a more declarative representation matching the edge principles introduced above.  

We now introduce a new formal language to be used to describe event models. 
Where $\psi \in \mathcal{L}$, the \emph{event language} $\mathcal{L_E}$ is: 
\begin{align*}
\varphi &::= \psi\!\Rightarrow\!\mathsf{e}  \mid \mathsf{e}\!\Rightarrow\!\psi \mid \neg \varphi \mid \varphi \vee \varphi \mid \Box \varphi 
\end{align*}
The formula $\psi\!\Rightarrow\!\mathsf{e}$ is read as ``$\psi$ implies the precondition of the (current) event'' and $\mathsf{e}\!\Rightarrow\!\psi$ as ``the precondition of the (current) event implies $\psi$''. We will use $e\!\Leftrightarrow\!\psi$ as shorthand for $\psi\!\Rightarrow\!\mathsf{e} \land \mathsf{e}\!\Rightarrow\!\psi$. Formulas of $\mathcal{L_E}$ are to be evaluated in single-agent event models, since we are going to specify the edge principles for each agent $a$ by a separate formula $\varphi_a$ of $\mathcal{L_E}$.
\begin{definition}[Satisfaction] 
Let $\mathcal{E} = (E,Q,pre)$
be a single-agent event model over $\mathcal{L}$ (so $Q \subseteq E^2$).  For any $e \in E$, satisfaction of $\mathcal{L_E}$-formulas in $\mathcal{E}$ is given by the following clauses extended with the standard clauses for the propositional connectives:
\noindent \begin{center}
\begin{tabular}{lll}
$(\mathcal{E},e) \vDash   \psi\!\Rightarrow\!\mathsf{e}$ & iff & $\vDash  \psi \to pre(e)$;\tabularnewline 
$(\mathcal{E},e) \vDash  \mathsf{e}\!\Rightarrow\!\psi $ & iff & $\vDash  pre(e) \to \psi$;\tabularnewline
$(\mathcal{E},e) \vDash \Box \varphi $ & iff & $(\mathcal{E},f) \vDash \varphi$ for all $(e,f)\in Q$.\tabularnewline

\end{tabular}
\par\end{center}			
A formula $\psi$ is called \emph{valid} in $\mathcal{E} = (E,Q,pre)$ if $(\mathcal{E}, e) \vDash \psi$ holds for all $e \in E$. We then write $\mathcal{E} \vDash \psi$. To have a convient notation for reasoning about what holds true for a single event with precondition $\varphi \in \mathcal{L}$, we introduce the following notation, where $\psi \in \mathcal{L_E}$:
\noindent \begin{center}
\begin{tabular}{lll}
$\varphi \vDash \psi$  & iff & $((\{\varphi\}, \emptyset, id_{\{\varphi\}}),\varphi) \vDash  \psi $ 
\end{tabular}\par\end{center}	
\end{definition}
Note that the $\mathsf{e}$ in the syntax is bound to the event $e$ at which the formula is evaluated. So $\mathsf{e}\!\Rightarrow\!p\to \Box \mathsf{e}\!\Rightarrow\!\neg p$ means that if the precondition of the current event implies $p$, then the precondition of any accessible event implies $\neg p$.
Concerning the notation $\varphi \vDash \psi$, note that we for instance have
$p \land q \vDash \mathsf{e}\!\Rightarrow\! p \land  \mathsf{e}\!\Rightarrow\! q$: Both $p$ and $q$ are implied by an event with precondition $p \land q$.
Note that the $\Rightarrow\!\mathsf{e}$ operator is not truth-functional: For instance we have $\top \vDash \mathsf{e}\!\Rightarrow\!(p \vee \neg p)$, but we don't have $\top \vDash \mathsf{e}\!\Rightarrow\!p  \vee \mathsf{e}\!\Rightarrow\!\neg p$.


\begin{example}
Consider the event model $\mathcal{E}'(\varphi)$ of Definition~\ref{a-star-varphi} for some $\varphi \in \mathcal{L}$ where $Ag = \{a\}$. By \textsc{Inertia}, if $e$ is an event not containing $\mathsf{h}_a$, then for any other event $f$ with $(e,f) \in Q_a$, we have $f = \top$. We can express this using an $\mathcal{L_E}$-formula: $\neg \mathsf{e}\!\Rightarrow\!\mathsf{h}_a \to \Box \mathsf{e}\!\Leftrightarrow\!\top$. The formula says: if $\mathsf{h}_a$ is not implied by the precondition of the current event, then any accessible event has a precondition equivalent to $\top$. The formula is simply \textsc{Inertia} expressed in $\mathcal{L_E}$, and we have $\mathcal{E}'(\varphi) \vDash \neg \mathsf{e}\!\Rightarrow\!\mathsf{h}_a \to \Box \mathsf{e}\!\Leftrightarrow\!\top$. 
\end{example}
When trying to come up with a new way of representing event models syntactically, there is a trade-off between generality and expressivity on one side and succinctness and elegance on the other. The more general a class of event models we want to be able to describe, the more complex the language might have to be and the longer and more complicated the formulas might become. Here we will aim for keeping things simple, even if it implies less generality. For instance, opposite the approach of \cite{charrier2017succinct}, we decided not to include propositional atoms in $\mathcal{L_E}$ for referring to the names of specific events. This limits  expressivity, as then the language can only distinguish events by their preconditions and can not represent distinct events with the same precondition.
However, for the event models of this paper, this is not a limitation. 

We move to define our syntactic event models. To make the distinction clear, we will now refer to the standard event models of Definition~\ref{def: event model} as \emph{semantic event models}.
\begin{definition}
A \emph{syntactic event model} is a pair $\mathcal{G} = (\psi_E,(\psi_a)_{a\in Ag})$, where all the $\psi$ formulas belong to $\mathcal{L_E}$. 
The semantic event model $\mathcal{H} = (E,Q,id_E)$ \emph{induced} by $\mathcal{G}$ is defined as follows:
\begin{itemize}
\item[-]  $E = \{ \varphi \in \mathcal{L}: \varphi$ is a conjunction of literals s.t.\ $ \varphi \vDash \psi_E \}$;
\item[-] For all $a \in Ag$, $Q_a$ is the largest subset of $E^2$ satisfying $(E,Q_a,id_E) \vDash \psi_a$. If such a unique largest set doesn't exist, let $Q_a$ be the empty set. 
\end{itemize}   
Where $\psi_{E_d} \in \mathcal{L_E}$, we call $(\mathcal{G},\psi_{E_d})$ a \emph{syntactic multi-pointed event model}. The \emph{induced} multi-pointed event model of $(\mathcal{G},\psi_{E_d})$ is $(\mathcal{H},E_d)$ where $\mathcal{H}$ is the event model induced by $\mathcal{G}$ and $E_d = \{ \varphi \in \mathcal{L} : \varphi$ is a conjunction of literals s.t.\ $\varphi \vDash \psi_{E_d} \}$.
\end{definition}

\begin{example}\label{example:first syntactic}
Consider again the event model $\mathcal{E}'(\varphi)$ of Def.~\ref{a-star-varphi}, where we here let $\varphi = q$, assume $At = \{q\}$ and assume $Ag$ to be any set of agents.
Then $\mathcal{E}'(\varphi)$ is induced by the syntactic event model $\mathcal{G} = (\psi_E, (\psi_a)_{a \in Ag})$ defined as follows:

$\psi_E = \mathsf{e}\!\Leftrightarrow\!\top \vee \bigl((
\mathsf{e}\!\Rightarrow\! q \lor  \mathsf{e}\!\Rightarrow\!\neg q)\ \land \bigwedge_{a \in Ag} ((\mathsf{e}\!\Rightarrow\!\mathsf{h}_a) \vee (\mathsf{e}\!\Rightarrow\!\neg \mathsf{h}_a))$

$\psi_a = (\mathsf{e}\!\Rightarrow\!\mathsf{h}_a \to \Box \mathsf{e}\!\Rightarrow\!q) \land (\neg \mathsf{e}\!\Rightarrow\!\mathsf{h}_a \to \Box \mathsf{e}\!\Leftrightarrow\!\top)$.

\noindent 
The definition of $\psi_E$ states that any event is either (equivalent to) $\top$ or else: 1) it implies either $q$ or $\neg q$ and, 2) for all $a \in Ag$, it implies either $\mathsf{h}_a$ or $\neg \mathsf{h}_a$. Note that since the induced event model is always a model over a set of conjunctive preconditions, we can reformulate this as follows: $\psi_E$ states that any event is either $\top$ or else 1) it contains either $q$ or $\neg q$ and, 2) for all $a \in Ag$, it contains either $\mathsf{h}_a$ or $\neg \mathsf{h}_a$. Comparing with Definition~\ref{a-star-varphi}, we see that this is exactly how we defined the set of events of this model.   
Concerning $\psi_a$, we earlier concluded that the second conjunct expresses \textsc{Inertia}. The first conjunct expresses \textsc{Basic Attentiveness}. 
\end{example}

The \emph{size} of a syntactic event model is the sum of the lengths of the formulas it consists of. We say that two semantic event models $\mathcal{E} = (E,Q,pre)$ and $\mathcal{E}' = (E',Q',pre')$ are \emph{equivalent} if there exists $e \in E, e' \in E'$ such that for all pointed Kripke models $\mathcal{M} = ((W,V,R),w)$ and all formulas $\varphi \in \mathcal{L}$, $\mathcal{M} \otimes \mathcal{E}, (w,e) \vDash \varphi$ iff $\mathcal{M} \otimes \mathcal{E}', (w,e') \vDash \varphi$.\footnote{We could have defined this notion equivalently in terms of bisimulations~\cite{charrier2017succinct}, but as we haven't defined bisimulations in this paper, we choose this equivalent formulation~\cite{kooi2011arrow}.} We can now prove an exponential succinctness result for syntactic event models. We show that for all $n\geq 1$, we can construct a particular syntactic event model $\mathcal{G}(n)$ that can't be represented by any semantic event model with less than $2^n$ events. 
\begin{proposition}[Exponential succinctness]
There exists syntactic event models  $\mathcal{G}(n)$, $n \geq 1$, such that all of the following holds: 
\begin{itemize}
\item $\mathcal{G}(n)$ has size $O(n)$.
\item The semantic event model  $\mathcal{H}(n)$ induced by  $\mathcal{G}(n)$ has $2^n$ events (and is hence of size $\Omega(2^n)$).
\item Any other semantic event model that is equivalent to $\mathcal{H}(n)$ will have at least $2^n$ events.    
\end{itemize}
Furthermore, we can construct the $\mathcal{G}(n)$ so that they use only one agent and where $n$ is the number of atomic propositions. 
\end{proposition}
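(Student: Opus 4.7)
The plan is to construct $\mathcal{G}(n)$ whose induced semantic event model $\mathcal{H}(n)$ performs a ``reveal-all-atoms'' update from a designated all-positive event, and to establish a matching $2^n$ lower bound via a bisimulation-chasing argument on a carefully chosen adversarial Kripke model. Fix $At = \{p_1, \ldots, p_n\}$ and $Ag = \{a\}$, and define
\begin{align*}
\psi_E &= \bigwedge_{i=1}^n \bigl((\mathsf{e}\!\Rightarrow\!p_i) \vee (\mathsf{e}\!\Rightarrow\!\neg p_i)\bigr) \wedge \bigwedge_{i=1}^n \bigl(\neg(\mathsf{e}\!\Rightarrow\!\mathsf{h}_a p_i) \wedge \neg(\mathsf{e}\!\Rightarrow\!\neg \mathsf{h}_a p_i)\bigr), \\
\psi_a &= \Bigl(\bigvee_{i=1}^n \mathsf{e}\!\Rightarrow\!\neg p_i\Bigr) \to \bigwedge_{j=1}^n \bigl((\mathsf{e}\!\Rightarrow\!p_j \to \Box\, \mathsf{e}\!\Rightarrow\!p_j) \wedge (\mathsf{e}\!\Rightarrow\!\neg p_j \to \Box\, \mathsf{e}\!\Rightarrow\!\neg p_j)\bigr), \\
\psi_{E_d} &= \bigwedge_{i=1}^n \mathsf{e}\!\Rightarrow\!p_i,
\end{align*}
each of length $O(n)$. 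First I would verify that the events of the induced $\mathcal{H}(n)$ are exactly the $2^n$ maximal conjunctions $e_v = \bigwedge_i p_i^{v_i}$ for $v \in \{0,1\}^n$ (writing $p_i^1 = p_i$, $p_i^0 = \neg p_i$), that $\psi_{E_d}$ picks out the unique all-positive event $e_0 = \bigwedge_i p_i$, and that the largest $Q_a$ allowed by $\psi_a$ is $\{(e_0, e_v) : v\} \cup \{(e_v, e_v) : v\}$: the hypothesis of $\psi_a$ fails precisely at $e_0$, leaving its outgoing edges unrestricted, while at every other $e_v$ it forces every successor's precondition to agree on every polarity and hence to equal $e_v$.

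Next I would compute the canonical update on a distinguishing input. Take $\mathcal{M} = (W, R, V)$ with $W = \{0,1\}^n$, $R_a = W \times W$, $V$ the standard propositional valuation, and designated $w_0 = (1,\ldots,1)$. In $\mathcal{M} \otimes \mathcal{H}(n)$ at $(w_0, e_0)$ the designated world has exactly the $2^n$ successors $\{(v, e_v) : v \in \{0,1\}^n\}$, and each non-designated successor $(v, e_v)$ has only itself as successor. Up to bisimulation the product is therefore the designated root together with $2^n$ pairwise non-bisimilar self-loop classes, one per valuation, one of which coincides with the designated class.

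For the lower bound, suppose $\mathcal{E}' = ((E', Q', pre'), e_0')$ is equivalent to $\mathcal{H}(n)$, so that updating $\mathcal{M}$ at $(w_0, e_0')$ yields a bisimilar result. For each valuation $v$ there must exist $f_v' \in E'$ with $(e_0', f_v') \in Q'_a$ and $\mathcal{M}, v \vDash pre'(f_v')$, so that $(v, f_v')$ realizes the self-loop class of valuation $v$. The crucial claim is that these $f_v'$ are pairwise distinct: if some $f'$ witnessed two different valuations $v \ne v'$, then both $(v, f')$ and $(v', f')$ would be successors of $(w_0, e_0')$, each needing to be the self-loop class of its respective valuation; this forces every $f''$ with $(f', f'') \in Q'_a$ to have $pre'(f'')$ simultaneously satisfied only at $v$ and only at $v'$, hence unsatisfiable in $\mathcal{M}$, leaving $(v, f')$ without any product-successor and contradicting its bisimilarity to a self-loop world. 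Allowing $e_0'$ to coincide with $f_{w_0}'$, the $2^n$ distinct $f_v'$ already give $|E'| \ge 2^n$. The main subtlety is this bisimulation-chasing: care is needed to verify that the self-loop target forces each witness precondition to pin down a single valuation and to correctly handle the possible overlap between $e_0'$ and the witnesses.
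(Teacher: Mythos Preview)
Your proof is correct in spirit but takes a genuinely different route from the paper. The paper does \emph{not} prove the $2^n$ lower bound directly: it first takes $\mathcal{G}(n)$ to be the syntactic event model of Example~\ref{example:first syntactic} with $Ag=\{1,\dots,n\}$, observes that the induced model is the attention model $\mathcal{E}'(q)$ of Definition~\ref{a-star-varphi}, and then simply cites the lower bound from Charrier and Schwarzentruber. For the single-agent clause it switches to a second construction (events $=$ subsets of $\{p_1,\dots,p_n\}$, with an edge $P'\to P''$ whenever some $p_i\in P''\setminus P'$), represents it by $\psi'_E=\top$ and $\psi'_a=\bigvee_i(\neg\,\mathsf{e}\!\Rightarrow\!p_i\to\Box\,\mathsf{e}\!\Rightarrow\!p_i)$, and cites the lower bound from Kooi and Renne on arrow updates. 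So the paper's proof is short precisely because the hard combinatorial work is delegated.

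Your construction is a third, independent one (a ``reveal-all-atoms'' model with a distinguished all-positive event whose out-neighbours are everything, and self-loops elsewhere), and you supply a self-contained bisimulation-chasing lower bound on the full-valuation Kripke model with total $R_a$. That argument is sound: since $R_a$ is total, any two product worlds $(v,f')$ and $(v',f')$ sharing the same event component have the \emph{same} successor set, so a single $f'$ cannot simultaneously realise two distinct self-loop valuations, nor can it realise both a self-loop valuation and the root class. This gives $2^n$ pairwise distinct witnesses $f'_v$ as claimed. Two small points to tighten: (i) your description of the product as ``$2^n$ self-loop classes, one of which coincides with the designated class'' is inaccurate, since $(w_0,e_0)$ is not a self-loop world---it sees all $2^n$ valuations; the distinctness of $f'_{w_0}$ from the other $f'_v$ therefore follows from the totality argument just given, not from the self-loop reasoning; (ii) the paper's definition of equivalence existentially quantifies over the base event $e$, so strictly you should note that your lower bound is for equivalence based at $e_0$ (the designated event picked out by $\psi_{E_d}$), which is the intended reading and matches how the paper uses the cited results.

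The trade-off: the paper's proof is terse and reuses known incompressibility results, but depends on two external references; your approach is longer but fully self-contained and arguably more transparent about \emph{why} $2^n$ events are forced.
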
  
\begin{proof}
	For each $n \geq 1$, let $\mathcal{G}(n)$ denote the syntactic event model of Example~6.4 with $Ag = \{1, \dots, n \}$. Let $\mathcal{H}(n)$ denote the semantic event model induced by $\mathcal{G}(n)$. The induced event model $\mathcal{H}(n)$ is the one defined in Definition~3.2 
	that we already
	concluded to have at least $2^n$ events (due to there being one event per subset of $\{ \mathsf{h}_a: a \in Ag\}$, the subset containing the $\mathsf{h}_a$ that occur positively in the event precondition). In~\cite{charrier2017succinct}, it is proven that $\mathcal{H}(n)$ is not equivalent to a semantic event model with less than $2^n$ events. However, $\mathcal{G}(n)$ is of size $O(n)$, as we will now see. The formula $\psi_E$ is of size $O(n)$: the inner-most disjunction is repeated once for each agent, but everything else is of fixed size. The formula $\psi_a$ is also of fixed size, it simply has size (length) 22. We however need one of these formulas for each agent, so in total $(\psi_a)_{a \in Ag}$ also has size $O(n)$.
	This proves the required, except for the last point about using only one agent. To use only one agent, we need to turn to a different succinctness result, the one about arrow updates in~\cite{kooi2011arrow}. For all $n \geq 1$, let $\mathcal{L}(n)$ be the language with atomic propositions $P = \{1,\dots,n\}$ and a single agent $a$. For each $n \geq 1$, let $\mathcal{H}'(n)$ be the semantic event model over $\mathcal{L}(n)$ in which each subset of $P$ is an event, and there is an $a$-edge from event $P' \subseteq P$ to event $P'' \subseteq P$ if for some $i$, $p_i \in P'' \setminus P'$. The event model $\mathcal{H}'(n)$ clearly has $2^n$ events. In \cite{kooi2011arrow}, it is shown that  there exists no semantic event model equivalent to $\mathcal{H}'(n)$ having less than $2^n$ events. To complete our proof, we then only need to show that we can represent $\mathcal{H}'(n)$ using a syntactic event model of size $O(n)$. Let $\mathcal{G}'(n) = (\psi'_E,\psi'_a)$ be the syntactic event model over $\mathcal{L}(n)$ defined by $\psi'_E = \top$ and $\psi'_a = \bigvee_{ 1 \leq i \leq n } (\neg \mathsf{e}\!\Rightarrow\!p_i \rightarrow \Box  \mathsf{e}\!\Rightarrow\!p_i)$. It is simple to check that the semantic event model induced by $\mathcal{G}'(n)$ is exactly $\mathcal{H}'(n)$. Also, clearly $\mathcal{G}'(n)$ has size $O(n)$.  
\end{proof}
In the proof above we refer to a result~\cite{charrier2017succinct} showing that  their succinct event models are exponentially more succinct than semantic event models. However, their representation of the event models of Definition~\ref{a-star-varphi} are of size $O(n^2)$ with $n$ being the number of agents, whereas our syntactic event model are of size $O(n)$, hence even more compact (their PDL program for each agent has length $O(n)$, whereas our corresponding formula $\psi_a$ is of constant length). 

Note that the semantic event model induced by a syntactic event model is of a particular form where the event preconditions are conjunctions of literals. While this limits the kind of semantic event models we can represent using syntactic event models, it still covers a fairly generous class. Any semantic event model $\mathcal{H} = (E,Q,pre)$ where events are conjunctions of literals (as all event models of this paper are) can be turned it into a syntactic event model $\mathcal{G} = (\psi_E,(\psi_a)_{a \in Ag})$ by simply letting $\psi_E = \bigvee_{e \in E} \mathsf{e}\!\Leftrightarrow\!e$ and $\psi_a = \bigwedge_{e \in E} \bigl(\mathsf{e} \Leftrightarrow e \to \Box \bigvee_{(e,f) \in Q_a} \mathsf{e} \Leftrightarrow f \bigr)$.

We are here only using syntactic event models to provide simple and succinct representations of our semantic event models (that are otherwise of exponential size). However, it is relevant to mention that these new syntactic event models could potentially also be interesting for other reasons. As mentioned in~\cite{baltag2022logics}, there has been quite a lot of resistance to DEL based on (semantic) event models, since one is ``mixing syntax and semantics'' (due to event models being semantic objects, but still appearing inside modal operators in the language). A syntactic event model clearly does not have this problem, as it's a representation using a sequence of formulas from the language $\mathcal{L_E}$. All formulas are from the same language, so by slightly extending it, we could even represent an event model syntactically by a single formula of such an extended language. 

\section{Related and Future Work}
This work has built on a previous model for attentive agents~\cite{bolander2015announcements}, generalising the framework to model
(1) agents who may pay attention to strict subsets of propositions; (2) agents who may default to specific truth values for the atomic formulas they failed to attend. 

What we here call \emph{attention } is similar to what has been called \emph{observability} in the AI and DEL literature. Observability can be attached to different aspects of the world: to propositional atoms \cite{brenner2009continual,hoek2011knowledge}, to actions \cite{bolander2015announcements}, to actions of agents \cite{bolander2018seeing}, or to particular actions \cite{baral2012action}, and the same holds for attention. However, conceptually, attention and observability are not exactly the same, and in this paper we have been focusing on representing attention to propositional atoms mainly as a starting point for a richer model of attention.  

In addition, we proposed a syntactic description of event models that, besides working towards settling the mixture of syntax and semantics typical of DEL, allowed us to reach an exponential succinctness result. There is a clear relation to generalized arrow updates~\cite{kooi2011generalized}, but we conjuncture that our syntactic event models can be even more succinct than generalized arrow updates. 
We however leave this for future work.

The examples provided in this paper are arguably toy examples in the sense of involving few agents (2) and few propositional atoms (also 2). Since the semantic event models grow exponentially in both the number of agents and propositional atoms, the semantic representation doesn't scale well. However, the syntactic representation does, and in future work we'd like to consider whether we can define a product update directly in terms of syntactic event models to allow for better scalability of our framework. 

We also plan to extend the model further to include more core features of attention, for instance an upper bound on the number of atomic formulas that an agent can pay attention to (a bound on the \emph{attention capacity}). 
This is a simple tweak of the model, but it allows us to capture a lot more: attention as a bounded resource. This can be applied in at least two distinct ways. 
For instance in Example~\ref{example:doctor}, whether the doctor pays attention to the fire breaking out or not might obviously depend on how busy she is attending to other things. This is similar to the invisible gorilla, where attending to the ball passes seems to consume all of the attention capacity. Adding attention capacities would  allow the robot to have a more realistic model of human attention and when to intervene. The second use of attention capacities could be to apply it to allow robots to manage their own attention in order to save computational resources.
In the DEL literature, attention as a cognitive resource has been explored by~\cite{belardinelli2021bounded}, where an attention budget and a subjective cost for formulas to be learnt are introduced in the model. 

Attention may also relate to the notion of awareness \cite{schipper2014awareness}, as both concepts can be thought as imposing some limitation on the set of propositions the agent entertains. However, the two also differ: Awareness seems to be more about the propositions that the agent can conceive and thus uses to reason, whereas attention (at least for how we formalised it) is a restriction on what agents perceive of an announcement. In this sense, attention seems to be more about learning dynamics, whereas awareness less so. Future work will explore their relationship.



\balance

\begin{acks}
We gratefully acknowledge Rasmus K. Rendsvig for important inputs in the early phases of this project. We also gratefully acknowledge funding support by the Carlsberg Foundation through The 
Center for Information and Bubble Studies (CIBS). Finally, we thank the anonymous reviewers for helpful comments and feedback. 		
\end{acks}

\bibliographystyle{ACM-Reference-Format} 
\bibliography{litt_std}


\begin{thebibliography}{21}


\ifx \showCODEN    \undefined \def \showCODEN     #1{\unskip}     \fi
\ifx \showDOI      \undefined \def \showDOI       #1{#1}\fi
\ifx \showISBNx    \undefined \def \showISBNx     #1{\unskip}     \fi
\ifx \showISBNxiii \undefined \def \showISBNxiii  #1{\unskip}     \fi
\ifx \showISSN     \undefined \def \showISSN      #1{\unskip}     \fi
\ifx \showLCCN     \undefined \def \showLCCN      #1{\unskip}     \fi
\ifx \shownote     \undefined \def \shownote      #1{#1}          \fi
\ifx \showarticletitle \undefined \def \showarticletitle #1{#1}   \fi
\ifx \showURL      \undefined \def \showURL       {\relax}        \fi
\providecommand\bibfield[2]{#2}
\providecommand\bibinfo[2]{#2}
\providecommand\natexlab[1]{#1}
\providecommand\showeprint[2][]{arXiv:#2}

\bibitem[\protect\citeauthoryear{Aucher}{Aucher}{2012}]%
        {aucher2012sequents}
\bibfield{author}{\bibinfo{person}{Guillaume Aucher}.}
  \bibinfo{year}{2012}\natexlab{}.
\newblock \showarticletitle{DEL-sequents for regression and epistemic
  planning}.
\newblock \bibinfo{journal}{\emph{Journal of Applied Non-Classical Logics}}
  \bibinfo{volume}{22}, \bibinfo{number}{4} (\bibinfo{year}{2012}),
  \bibinfo{pages}{337--367}.
\newblock


\bibitem[\protect\citeauthoryear{Baltag, Moss, and Solecki}{Baltag
  et~al\mbox{.}}{2022}]%
        {baltag2022logics}
\bibfield{author}{\bibinfo{person}{Alexandru Baltag},
  \bibinfo{person}{Lawrence~S Moss}, {and} \bibinfo{person}{Slawomir Solecki}.}
  \bibinfo{year}{2022}\natexlab{}.
\newblock \showarticletitle{Logics for epistemic actions: completeness,
  decidability, expressivity}.
\newblock \bibinfo{journal}{\emph{arXiv preprint arXiv:2203.06744}}
  (\bibinfo{year}{2022}).
\newblock


\bibitem[\protect\citeauthoryear{Baral, Gelfond, Pontelli, and Son}{Baral
  et~al\mbox{.}}{2012}]%
        {baral2012action}
\bibfield{author}{\bibinfo{person}{Chitta Baral}, \bibinfo{person}{Gregory
  Gelfond}, \bibinfo{person}{Enrico Pontelli}, {and} \bibinfo{person}{Tran~Cao
  Son}.} \bibinfo{year}{2012}\natexlab{}.
\newblock \showarticletitle{An action language for reasoning about beliefs in
  multi-agent domains}. In \bibinfo{booktitle}{\emph{Proceedings of the 14th
  International Workshop on Non-Monotonic Reasoning}},
  Vol.~\bibinfo{volume}{4}.
\newblock


\bibitem[\protect\citeauthoryear{Belardinelli and Rendsvig}{Belardinelli and
  Rendsvig}{2021}]%
        {belardinelli2021bounded}
\bibfield{author}{\bibinfo{person}{Gaia Belardinelli} {and}
  \bibinfo{person}{Rasmus~K. Rendsvig}.} \bibinfo{year}{2021}\natexlab{}.
\newblock \showarticletitle{Epistemic Planning with Attention as a Bounded
  Resource}. In \bibinfo{booktitle}{\emph{Logic, Rationality, and
  Interaction}}, \bibfield{editor}{\bibinfo{person}{Sujata Ghosh} {and}
  \bibinfo{person}{Thomas Icard}} (Eds.). \bibinfo{publisher}{Springer
  International Publishing}, \bibinfo{address}{Cham}, \bibinfo{pages}{14--30}.
\newblock
\showISBNx{978-3-030-88708-7}


\bibitem[\protect\citeauthoryear{Blackburn, de~Rijke, and Venema}{Blackburn
  et~al\mbox{.}}{2001}]%
        {blac.ea:moda}
\bibfield{author}{\bibinfo{person}{P. Blackburn}, \bibinfo{person}{M. de
  Rijke}, {and} \bibinfo{person}{Y. Venema}.} \bibinfo{year}{2001}\natexlab{}.
\newblock \bibinfo{booktitle}{\emph{Modal Logic}}. \bibinfo{series}{Cambridge
  Tracts in Theoretical Computer Science}, Vol.~\bibinfo{volume}{53}.
\newblock \bibinfo{publisher}{Cambridge University Press},
  \bibinfo{address}{Cambridge, UK}.
\newblock
\urldef\tempurl%
\url{https://doi.org/10.1017/CBO9781107050884}
\showDOI{\tempurl}


\bibitem[\protect\citeauthoryear{Bolander}{Bolander}{2018}]%
        {bolander2018seeing}
\bibfield{author}{\bibinfo{person}{Thomas Bolander}.}
  \bibinfo{year}{2018}\natexlab{}.
\newblock \showarticletitle{Seeing Is Believing: Formalising False-Belief Tasks
  in Dynamic Epistemic Logic}.
\newblock In \bibinfo{booktitle}{\emph{Outstanding Contributions to Logic}}.
  Number~12. \bibinfo{publisher}{Springer}, \bibinfo{pages}{207--236}.
\newblock


\bibitem[\protect\citeauthoryear{Bolander, Dissing, and Herrmann}{Bolander
  et~al\mbox{.}}{2021}]%
        {bolander2021del}
\bibfield{author}{\bibinfo{person}{Thomas Bolander}, \bibinfo{person}{Lasse
  Dissing}, {and} \bibinfo{person}{Nicolai Herrmann}.}
  \bibinfo{year}{2021}\natexlab{}.
\newblock \showarticletitle{DEL-based Epistemic Planning for Human-Robot
  Collaboration: Theory and Implementation}. In
  \bibinfo{booktitle}{\emph{Proceedings of the 18th International Conference on
  Principles of Knowledge Representation and Reasoning (KR 2021)}}.
\newblock


\bibitem[\protect\citeauthoryear{Bolander, van Ditmarsch, Herzig, Lorini,
  Pardo, and Schwarzentruber}{Bolander et~al\mbox{.}}{2016}]%
        {bolander2015announcements}
\bibfield{author}{\bibinfo{person}{Thomas Bolander}, \bibinfo{person}{Hans van
  Ditmarsch}, \bibinfo{person}{Andreas Herzig}, \bibinfo{person}{Emiliano
  Lorini}, \bibinfo{person}{Pere Pardo}, {and} \bibinfo{person}{Fran{\c{c}}ois
  Schwarzentruber}.} \bibinfo{year}{2016}\natexlab{}.
\newblock \showarticletitle{Announcements to Attentive Agents}.
\newblock \bibinfo{journal}{\emph{Journal of Logic, Language and Information}}
  \bibinfo{volume}{25} (\bibinfo{year}{2016}), \bibinfo{pages}{1--35}.
\newblock
\urldef\tempurl%
\url{https://doi.org/10.1007/s10849-015-9234-3}
\showDOI{\tempurl}


\bibitem[\protect\citeauthoryear{Brenner and Nebel}{Brenner and Nebel}{2009}]%
        {brenner2009continual}
\bibfield{author}{\bibinfo{person}{Michael Brenner} {and}
  \bibinfo{person}{Bernhard Nebel}.} \bibinfo{year}{2009}\natexlab{}.
\newblock \showarticletitle{Continual planning and acting in dynamic multiagent
  environments}.
\newblock \bibinfo{journal}{\emph{Autonomous Agents and Multi-Agent Systems}}
  \bibinfo{volume}{19}, \bibinfo{number}{3} (\bibinfo{year}{2009}),
  \bibinfo{pages}{297--331}.
\newblock
\urldef\tempurl%
\url{https://doi.org/10.1007/s10458-009-9081-1}
\showDOI{\tempurl}


\bibitem[\protect\citeauthoryear{Charrier and Schwarzentruber}{Charrier and
  Schwarzentruber}{2017}]%
        {charrier2017succinct}
\bibfield{author}{\bibinfo{person}{Tristan Charrier} {and}
  \bibinfo{person}{Francois Schwarzentruber}.} \bibinfo{year}{2017}\natexlab{}.
\newblock \showarticletitle{A Succinct Language for Dynamic Epistemic Logic.}.
  In \bibinfo{booktitle}{\emph{AAMAS}}. \bibinfo{pages}{123--131}.
\newblock


\bibitem[\protect\citeauthoryear{Hoek, Troquard, and Wooldridge}{Hoek
  et~al\mbox{.}}{2011}]%
        {hoek2011knowledge}
\bibfield{author}{\bibinfo{person}{Wiebe van~der Hoek},
  \bibinfo{person}{Nicolas Troquard}, {and} \bibinfo{person}{Michael
  Wooldridge}.} \bibinfo{year}{2011}\natexlab{}.
\newblock \showarticletitle{Knowledge and control}. In
  \bibinfo{booktitle}{\emph{The 10th International Conference on Autonomous
  Agents and Multiagent Systems--Volume 2}}. International Foundation for
  Autonomous Agents and Multiagent Systems, \bibinfo{pages}{719--726}.
\newblock


\bibitem[\protect\citeauthoryear{Kooi and Renne}{Kooi and Renne}{2011a}]%
        {kooi2011arrow}
\bibfield{author}{\bibinfo{person}{Barteld Kooi} {and} \bibinfo{person}{Bryan
  Renne}.} \bibinfo{year}{2011}\natexlab{a}.
\newblock \showarticletitle{Arrow update logic}.
\newblock \bibinfo{journal}{\emph{The Review of Symbolic Logic}}
  \bibinfo{volume}{4}, \bibinfo{number}{4} (\bibinfo{year}{2011}),
  \bibinfo{pages}{536--559}.
\newblock


\bibitem[\protect\citeauthoryear{Kooi and Renne}{Kooi and Renne}{2011b}]%
        {kooi2011generalized}
\bibfield{author}{\bibinfo{person}{Barteld Kooi} {and} \bibinfo{person}{Bryan
  Renne}.} \bibinfo{year}{2011}\natexlab{b}.
\newblock \showarticletitle{Generalized arrow update logic}. In
  \bibinfo{booktitle}{\emph{Proceedings of the 13th Conference on Theoretical
  Aspects of Rationality and Knowledge}}. ACM, \bibinfo{pages}{205--211}.
\newblock
\urldef\tempurl%
\url{https://doi.org/10.1145/2000378.2000403}
\showDOI{\tempurl}


\bibitem[\protect\citeauthoryear{Mack and Rock}{Mack and Rock}{1998}]%
        {mack1998inattention}
\bibfield{author}{\bibinfo{person}{A. Mack} {and} \bibinfo{person}{I. Rock}.}
  \bibinfo{year}{1998}\natexlab{}.
\newblock \bibinfo{booktitle}{\emph{{Inattentional blindness: Perception
  without attention}}}.
\newblock \bibinfo{publisher}{Cambridge, MA: MIT Press}.
\newblock


\bibitem[\protect\citeauthoryear{Schipper}{Schipper}{2014}]%
        {schipper2014awareness}
\bibfield{author}{\bibinfo{person}{Burkhard~C. Schipper}.}
  \bibinfo{year}{2014}\natexlab{}.
\newblock \showarticletitle{{Awareness}}. In \bibinfo{booktitle}{\emph{Handbook
  of Epistemic Logic}}, \bibfield{editor}{\bibinfo{person}{Hans van Ditmarsch},
  \bibinfo{person}{Joseph~Y. Halpern}, \bibinfo{person}{Wiebe van~der Hoek},
  {and} \bibinfo{person}{Barteld~P. Kooi}} (Eds.). \bibinfo{publisher}{College
  Publications}.
\newblock


\bibitem[\protect\citeauthoryear{Simons and Chabris}{Simons and
  Chabris}{[n.d.]}]%
        {gorilla_youtube}
\bibfield{author}{\bibinfo{person}{Daniel Simons} {and}
  \bibinfo{person}{Christopher Chabris}.} \bibinfo{year}{[n.d.]}\natexlab{}.
\newblock \bibinfo{booktitle}{\emph{selective attention test}}.
\newblock Youtube.
\newblock
\urldef\tempurl%
\url{https://www.youtube.com/watch?v=vJG698U2Mvo}
\showURL{%
\tempurl}


\bibitem[\protect\citeauthoryear{Simons and Chabris}{Simons and
  Chabris}{1999}]%
        {simons1999gorillas}
\bibfield{author}{\bibinfo{person}{Daniel~J. Simons} {and}
  \bibinfo{person}{Christopher~F. Chabris}.} \bibinfo{year}{1999}\natexlab{}.
\newblock \showarticletitle{Gorillas in our midst: Sustained inattentional
  blindness for dynamic events}.
\newblock \bibinfo{journal}{\emph{perception}} \bibinfo{volume}{28},
  \bibinfo{number}{9} (\bibinfo{year}{1999}), \bibinfo{pages}{1059--1074}.
\newblock


\bibitem[\protect\citeauthoryear{Simons and Chabris}{Simons and
  Chabris}{2011}]%
        {simons2011believe}
\bibfield{author}{\bibinfo{person}{Daniel~J. Simons} {and}
  \bibinfo{person}{Christopher~F. Chabris}.} \bibinfo{year}{2011}\natexlab{}.
\newblock \showarticletitle{What People Believe about How Memory Works: A
  Representative Survey of the U.S. Population}.
\newblock \bibinfo{journal}{\emph{PLOS ONE}} \bibinfo{volume}{6},
  \bibinfo{number}{8} (\bibinfo{date}{08} \bibinfo{year}{2011}),
  \bibinfo{pages}{1--7}.
\newblock
\urldef\tempurl%
\url{https://doi.org/10.1371/journal.pone.0022757}
\showDOI{\tempurl}


\bibitem[\protect\citeauthoryear{Simons and Chabris}{Simons and
  Chabris}{2012}]%
        {simons2012common}
\bibfield{author}{\bibinfo{person}{Daniel~J. Simons} {and}
  \bibinfo{person}{Christopher~F. Chabris}.} \bibinfo{year}{2012}\natexlab{}.
\newblock \showarticletitle{Common (Mis)Beliefs about Memory: A Replication and
  Comparison of Telephone and Mechanical Turk Survey Methods}.
\newblock \bibinfo{journal}{\emph{PLOS ONE}} \bibinfo{volume}{7},
  \bibinfo{number}{12} (\bibinfo{date}{12} \bibinfo{year}{2012}),
  \bibinfo{pages}{1--5}.
\newblock
\urldef\tempurl%
\url{https://doi.org/10.1371/journal.pone.0051876}
\showDOI{\tempurl}


\bibitem[\protect\citeauthoryear{van Ditmarsch, van~der Hoek, and Kooi}{van
  Ditmarsch et~al\mbox{.}}{2007}]%
        {ditmarsch2007dynamic}
\bibfield{author}{\bibinfo{person}{Hans van Ditmarsch}, \bibinfo{person}{Wiebe
  van~der Hoek}, {and} \bibinfo{person}{Barteld Kooi}.}
  \bibinfo{year}{2007}\natexlab{}.
\newblock \bibinfo{booktitle}{\emph{Dynamic Epistemic Logic}}.
\newblock \bibinfo{publisher}{Springer Publishing Company}.
\newblock
\urldef\tempurl%
\url{https://doi.org/10.1007/978-1-4020-5839-4}
\showDOI{\tempurl}


\bibitem[\protect\citeauthoryear{Verbrugge}{Verbrugge}{2009}]%
        {Verbrugge2009matter}
\bibfield{author}{\bibinfo{person}{Rineke Verbrugge}.}
  \bibinfo{year}{2009}\natexlab{}.
\newblock \showarticletitle{Logic and Social Cognition: The Facts Matter, and
  so Do Computational Models}.
\newblock \bibinfo{journal}{\emph{Journal of Philosophical Logic}}
  \bibinfo{volume}{38}, \bibinfo{number}{6} (\bibinfo{year}{2009}),
  \bibinfo{pages}{649--680}.
\newblock
\urldef\tempurl%
\url{https://doi.org/10.1007/s10992-009-9115-9}
\showDOI{\tempurl}


\end{thebibliography}

\end{document}